\providecommand{\tabularnewline}{\\}
\theoremstyle{plain}
\theoremstyle{plain}
\theoremstyle{plain}
\newtheorem{lem}{\protect\lemmaname}
\theoremstyle{plain}
\newtheorem{thm}{\protect\theoremname}
\theoremstyle{plain}
\newtheorem{cor}{\protect\corollaryname}  
\theoremstyle{definition}
\theoremstyle{definition}
\theoremstyle{definition}
\newtheorem{rem}{\protect\remarkname}
\providecommand{\claimname}{Claim}
\providecommand{\lemmaname}{Lemma}
\providecommand{\propositionname}{Proposition}
\providecommand{\theoremname}{Theorem}
\providecommand{\corollaryname}{Corollary} 
\providecommand{\definitionname}{Definition}
\providecommand{\assumptionname}{Assumption}
\providecommand{\remarkname}{Remark}
\newcommand{\openone}{\mathds{1}}
\newcommand{\Nrm}{\mathrm{N}}
\newcommand{\qmax}{q_{\mathrm{max}}}
\newcommand{\Bernoulli}{\mathrm{Bernoulli}}
\newcommand{\Binomial}{\mathrm{Binomial}}
\newcommand{\SNR}{\mathrm{SNR}}
\newcommand{\pe}{P_{\mathrm{e}}}
\newcommand{\xv}{\mathbf{x}}
\newcommand{\Xv}{\mathbf{X}}
\newcommand{\Yv}{\mathbf{Y}}
\newcommand{\Ac}{\mathcal{A}}
\newcommand{\Bc}{\mathcal{B}}
\newcommand{\Yc}{\mathcal{Y}}
\newcommand{\EE}{\mathbb{E}}
\newcommand{\PP}{\mathbb{P}}
\newcommand{\var}{\mathrm{Var}}
\newcommand{\bell}{\boldsymbol{\ell}}
\title{Phase Transitions in the Pooled Data Problem}
\author{Jonathan Scarlett and Volkan Cevher \\ [1mm]
\normalfont Laboratory for Information and Inference Systems (LIONS) \\
\'Ecole Polytechnique F\'ed\'erale de Lausanne (EPFL) \\
\{jonathan.scarlett,volkan.cevher\}@epfl.ch }
\newcommand\footnoteref[1]{\protected@xdef\@thefnmark{\ref{#1}}\@footnotemark}
\begin{document}

\maketitle

\begin{abstract}
        In this paper, we study the {\em pooled data} problem of identifying the labels associated with a large collection of items, based on a sequence of pooled tests revealing the counts of each label within the pool.  In the noiseless setting, we identify an exact asymptotic threshold on the required number of tests with optimal decoding, and prove a {\em phase transition} between complete success and complete failure.  In addition, we present a novel {\em noisy} variation of the problem, and provide an information-theoretic framework for characterizing the required number of tests for general random noise models.  Our results reveal that noise can make the problem considerably more difficult, with strict increases in the scaling laws even at low noise levels.  Finally, we demonstrate similar behavior in an {\em approximate recovery} setting, where a given number of errors is allowed in the decoded labels.
\end{abstract}


\vspace*{-2ex}
\section{Introduction}
\vspace*{-1ex}

Consider the following setting: There exists a large population of items, each of which has an associated label.  The labels are initially unknown, and are to be estimated based on {\em pooled tests}.  Each pool consists of some subset of the population, and the test outcome reveals the {\em total number of items} corresponding to each label that are present in the pool (but not the individual labels).  This problem, which we refer to as the {\em pooled data} problem, was recently introduced in \cite{Wan16a,Wan16b}, and further studied in \cite{Ala16,Ala17}.  It is of interest in applications such as medical testing, genetics, and learning with privacy constraints, and has connections to the group testing problem \cite{Du93} and its linear variants \cite{Seb85,Mal98}.

The best known bounds on the required number of tests under optimal decoding were given in \cite{Ala16}; however, the upper and lower bounds therein do not match, and can exhibit a large gap.  In this paper, we completely close these gaps by providing a new lower bound that exactly matches the upper bound of \cite{Ala16}.  These results collectively reveal a {\em phase transition} between success and failure, with the probability of error vanishing when the number of tests exceeds a given threshold, but tending to one below that threshold.  In addition, we explore the novel aspect of random noise in the measurements, and show that this can significantly increase the required number of tests.  Before summarizing these contributions in more detail, we formally introduce the problem.

\vspace*{-1ex}
\subsection{Problem setup} \label{sec:setup}
\vspace*{-1ex}


We consider a large population of items $[p] = \{1,\dotsc,p\}$, each of which has an associated label in $[d] = \{1,\dotsc,d\}$.  We let $\pi = (\pi_1,\dotsc,\pi_d)$ denote a vector containing the proportions of items having each label, and we assume that the vector of labels itself, $\beta = (\beta_1,\dotsc,\beta_p)$, is uniformly distributed over the sequences consistent with these proportions:
\begin{equation}
    \beta \sim \mathrm{Uniform}( \Bc(\pi) ), \label{eq:beta_dist}
\end{equation}
where $\Bc(\pi)$ is the set of length-$p$ sequences whose empirical distribution is $\pi$.

The goal is to recover $\beta$ based on a sequence of pooled tests.  The $i$-th  test is represented by a (possibly random) vector $X^{(i)} \in \{0,1\}^p$, whose $j$-th entry $X^{(i)}_{j}$ indicates whether the $j$-th item is included in the $i$-th test.  We define a {\em measurement matrix} $\Xv \in \{0,1\}^{n \times p}$ whose $i$-th row is given by $X^{(i)}$ for $i=1,\dotsc,n$, where $n$ denotes the total number of tests.  We focus on the {\em non-adaptive} testing scenario, where the entire matrix $\Xv$ must be specified prior to performing any tests.  

In the noiseless setting, the $i$-th test outcome is a vector $Y^{(i)} =  (Y^{(i)}_1,\dotsc,Y^{(i)}_d)$, with $t$-th entry
\begin{equation}
    Y^{(i)}_t = N_t(\beta,X^{(i)}), \label{eq:Y_noiseless}  \medskip
\end{equation}
where for $t=1,\dotsc,d$, we let $N_t(\beta,X) = \sum_{j \in [p]} \openone\{ \beta_j = t \,\cap\, X_j = 1 \}$ denote the number of items with label $t$ that are included in the test described by $X \in \{0,1\}^p$.   More generally, in the possible presence of noise, the $i$-th observation is randomly generated according to
\begin{equation}
    \big(Y^{(i)} \,|\, X^{(i)},\beta\big) \sim P_{Y|{N_1(\beta,X^{(i)})}\dotsc N_d(\beta,X^{(i)})} \label{eq:Y_noisy}
\end{equation}
for some conditional probability mass function $P_{Y|N_1,\dotsc,N_d}$ (or density function in the case of continuous observations).  We assume that the observations $Y^{(i)}$ ($i = 1,\dotsc,n$) are conditionally independent given $\Xv$, but otherwise make no assumptions on $P_{Y|N_1,\dotsc,N_d}$.  Clearly, the noiseless model \eqref{eq:Y_noiseless} falls under this more general setup.

Similarly to $\Xv$, we let $\Yv$ denote an $n \times d$ matrix of observations, with the $i$-th row being $Y^{(i)}$.  Given $\Xv$ and $\Yv$, a {\em decoder} outputs an estimate $\hat{\beta}$ of $\beta$, and the error probability is given by
\begin{equation}
    \pe = \PP[\hat{\beta} \ne \beta], \label{eq:pe}
\end{equation}
where the probability is with respect to $\beta$, $\Xv$, and $\Yv$.  We seek to find conditions on the number of tests $n$ under which $\pe$ attains a certain target value in the limit as $p \to \infty$, and our main results provide necessary conditions (i.e., lower bounds on $n$) for this to occur.  As in \cite{Ala16}, we focus on the case that $d$ and $\pi$ are fixed and do not depend on $p$.\footnote{More precisely, $\pi$ should be rounded to the nearest empirical distribution (e.g., in $\ell_{\infty}$-norm) for sequences $\beta \in [d]^p$ of length $p$; we leave such rounding implicit throughout the paper.} 

\begin{table}
    \begin{centering}
    \begin{tabular}{|c|c|c|}
    \hline 
    \begin{minipage}{4cm} \centering Sufficient for $\pe \to 0$ \cite{Ala16}\end{minipage}  & \begin{minipage}{4cm} \centering Necessary for $\pe \not\to 1$ \cite{Ala16} \end{minipage} & \begin{minipage}{4cm}  \centering Necessary for $\pe \not\to 1$ \\  (this paper) \end{minipage}  \tabularnewline
    \hline 
    \hline 
    $\displaystyle\frac{p}{\log p}\cdot\max_{r\in\{1,\dotsc,d-1\}}f(r)$ & $\displaystyle\frac{p}{\log p}\cdot\frac{1}{2}f(1)$ & $\displaystyle\frac{p}{\log p}\cdot\max_{r\in\{1,\dotsc,d-1\}}f(r)$\tabularnewline
    \hline 
    \end{tabular}
    \par\end{centering}
    
    \smallskip
    \caption{Necessary and sufficient conditions on the number of tests $n$ in
    the noiseless setting. The function $f(r)$ is defined in \eqref{eq:def_f}.  Asymptotic multiplicative $1+o(1)$ terms are omitted. \label{tbl:noiseless} \vspace*{-3mm}}
\end{table}

\begin{table}
    \begin{centering}
    \begin{tabular}{|c|c|c|c|}
    \hline 
    Noiseless testing & \begin{minipage}{3cm} \centering Noisy testing \\ ($\SNR= p^{\Theta(1)}$) \end{minipage} & \begin{minipage}{3.5cm} \centering Noisy testing \\ ($\SNR= (\log p)^{\Theta(1)}$) \end{minipage} & \begin{minipage}{3cm} \centering Noisy testing \\ ($\SNR= \Theta(1)$) \end{minipage} \tabularnewline
    \hline 
    \hline 
    $\displaystyle\Theta\Big(\frac{p}{\log p}\Big)$ & $\displaystyle\Omega\Big(\frac{p}{\log p}\Big)$ & $\displaystyle\Omega\Big(\frac{p}{\log\log p}\Big)$ & $\displaystyle\Omega\big(p\log p\big)$\tabularnewline
    \hline 
    \end{tabular}
    \par\end{centering}
    
    \smallskip
    \caption{Necessary and sufficient conditions on the number of tests $n$ in
    the noisy setting. $\SNR$ denotes the signal-to-noise ratio, and the noise model is given in Section \ref{sec:noisy}. \label{tbl:noisy}} \vspace*{-3ex}
\end{table}

\vspace*{-1ex}
\subsection{Contributions and comparisons to existing bounds}
\vspace*{-1ex}

Our focus in this paper is on {\em information-theoretic} bounds on the required number of tests that hold regardless of practical considerations such as computation and storage.  Among the existing works in the literature, the one most relevant to this paper is \cite{Ala16}, whose bounds strictly improve on the initial bounds in \cite{Wan16a}.  The same authors also proved a phase transition for a {\em practical} algorithm based on approximate message passing \cite{Ala17}, but the required number of tests is in fact significantly larger than the information-theoretic threshold (specifically, linear in $p$ instead of sub-linear).

Table \ref{tbl:noiseless} gives a summary of the bounds from \cite{Ala16} and our contributions in the noiseless setting.  To define the function $f(r)$ therein, we introduce the additional notation that for $r = \{1,\dotsc,d-1\}$, $\pi^{(r)} = (\pi_1^{(r)},\dotsc,\pi_r^{(r)})$ is a vector whose first entry sums the largest $d-r+1$ entries of $\pi$, and whose remaining entries coincide with the remaining $r-1$ entries of $\pi$.  We have
\begin{equation}
    f(r) = \max_{r \in \{1,\dotsc,d-1\}} \frac{2(H(\pi) - H(\pi^{(r)}))}{ d - r }, \label{eq:def_f}
\end{equation}
meaning that the entries in Table \ref{tbl:noiseless} corresponding to the results of \cite{Ala16} are given as follows:
\begin{itemize}[leftmargin=6ex]
    \item (Achievability) When the entries of $\Xv$ are i.i.d.~on $\Bernoulli(q)$ for some $q \in (0,1)$ (not depending on $p$), there exists a decoder such that $\pe \to 0$ as $p \to \infty$ with
    \begin{equation}
        n \le \frac{p}{\log p}\bigg( \max_{r \in \{1,\dotsc,d-1\}} \frac{2(H(\pi) - H(\pi^{(r)}))}{ d - r }\bigg)(1 + \eta) \label{eq:ach_prev}
    \end{equation}
    for arbitrarily small $\eta > 0$.
    \item (Converse) In order to achieve $\pe \not\to 1$ as $p \to \infty$, it is necessary that
    \begin{equation}
        n \ge \frac{p}{\log p}\bigg( \frac{H(\pi)}{ d - 1 }\bigg)(1 - \eta) \label{eq:conv_prev}
    \end{equation}
    for arbitrarily small $\eta > 0$.
\end{itemize}

Unfortunately, these bounds do not coincide.  If the maximum in \eqref{eq:ach_prev} is achieved by $r =1$ (which occurs, for example, when $\pi$ is uniform \cite{Ala16}), then the gap only amounts to a factor of two.  However, as we show in Figure \ref{fig:rPlot}, if we compute the bounds for some ``random'' choices of $\pi$ then the gap is typically larger (i.e., $r=1$ does not achieve the maximum), and we can construct choices where the gap is significantly larger.  Closing these gaps was posed as a key open problem in \cite{Ala16}.

\begin{figure} 
    \begin{centering}
        \includegraphics[width=0.45\columnwidth]{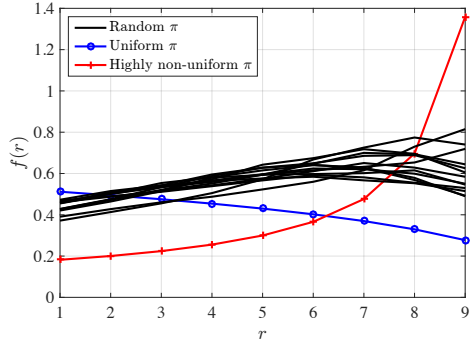}
        \par
    \end{centering}
    \vspace*{-2ex}
    \caption{\small The function $f(r)$ in \eqref{eq:def_f}, for several choices of $\pi$, with $d=10$.  The random $\pi$ are drawn uniformly on the probability simplex, and the highly non-uniform choice of $\pi$ is given by $\pi = (0.49, 0.49, 0.0025, \dotsc, 0.0025)$.  When the maximum is achieved at $r=1$, the bounds of \cite{Ala16} coincide up to a factor of two, whereas if the maximum is achieved for $r > 1$ then the gap is larger.   \label{fig:rPlot}} \vspace*{-2ex}
\end{figure}

We can now summarize our contributions as follows:
\begin{enumerate}[leftmargin=5ex]
    \item We give a lower bound that exactly matches \eqref{eq:ach_prev}, thus completely closing the above-mentioned gaps in the existing bounds and solving the open problem raised in \cite{Ala16}.  More specifically, we show that $\pe \to 1$ whenever $n \le \frac{p}{\log p}\big( \max_{r \in \{1,\dotsc,d-1\}} \frac{2(H(\pi) - H(\pi^{(r)}))}{ d - r }\big)(1 - \eta)$ for some $\eta > 0$, thus identifying an exact {\em phase transition} -- a threshold above which the error probability vanishes, but below which the error probability tends to one.
    \item We develop a framework for understanding variations of the problem consisting of random noise, and give an example of a noise model where the scaling laws are strictly higher compared to the noiseless case.  A summary is given in  Table \ref{tbl:noisy}; the case $\SNR= (\log p)^{\Theta(1)}$ reveals a strict increase in the scaling laws even when the signal-to-noise ratio grows unbounded, and the case $\SNR = \Theta(1)$ reveals that the required number of tests increases from sub-linear to super-linear in the dimension when the signal-to-noise ratio is constant.
    \item In the supplementary material, we discuss how our lower bounds extend readily to the {\em approximate recovery} criterion, where we only require $\beta$ to be identified up to a certain Hamming distance.  However, for clarity, we focus on exact recovery throughout the paper.
\end{enumerate}

In a recent independent work \cite{Che17}, an {\em adversarial} noise setting was introduced.  This turns out to be fundamentally different to our noisy setting.  In particular, the results of \cite{Che17} state that exact recovery is impossible, and even with approximate recovery, a huge number of tests (i.e., higher than polynomial) is needed unless $\Delta = O\big(\qmax^{1/2 + o(1)}\big)$, where $\qmax$ is the maximum allowed reconstruction error measured by the Hamming distance, and $\Delta$ is maximum adversarial noise amplitude.  Of course, both random and adversarial noise are of significant interest, depending on the application.

\textbf{Notation.}
For a positive integer $d$, we write $[d] = \{1,\dotsc,d\}$.  We use standard information-theoretic notations for the (conditional) entropy and mutual information, e.g., $H(X)$, $H(Y|X)$, $I(X;Y|Z)$ \cite{Cov01}.  All logarithms have base $e$, and accordingly, all of the preceding information measures are in units of nats.  The Gaussian distribution with mean $\mu$ and variance $\sigma^2$ is denoted by $\Nrm(\mu,\sigma^2)$.  We use the standard asymptotic notations $O(\cdot)$, $o(\cdot)$, $\Omega(\cdot)$, $\omega(\cdot)$ and $\Theta(\cdot)$. 

\section{Main results} \label{sec:results}

In this section, we present our main results for the noiseless and noisy settings.  The proofs are given in Section \ref{sec:proofs}, as well as the supplementary material.

\vspace*{-1.5ex}
\subsection{Phase transition in the noiseless setting}

The following theorem proves that the upper bound given in \eqref{eq:ach_prev} is tight.  Recall that for $r = \{1,\dotsc,d-1\}$, $\pi^{(r)} = (\pi_1^{(r)},\dotsc,\pi_r^{(r)})$ is a vector whose first entry sums the largest $d-r+1$ entries of $\pi$, and whose remaining entries coincide with the remaining $r-1$ entries of $\pi$.

\begin{thm} \label{thm:standard}
    \emph{(Noiseless setting)}
    Consider the pooled data problem described in Section \ref{sec:setup} with a given number of labels $d$ and  label proportion vector $\pi$ (not depending on the dimension $p$).  For any decoder, in order to achieve $\pe \not\to 1$ as $p \to \infty$, it is necessary that
    \begin{equation}
        n \ge \frac{p}{\log p}\bigg( \max_{r \in \{1,\dotsc,d-1\}} \frac{2(H(\pi) - H(\pi^{(r)}))}{ d - r }\bigg)(1 - \eta) \label{eq:conv}
    \end{equation}
    for arbitrarily small $\eta > 0$.
\end{thm}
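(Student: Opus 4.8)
The plan is to prove, for each fixed $r\in\{1,\dots,d-1\}$ and each $\eta>0$, that $n\le\frac{p}{\log p}\cdot\frac{2(H(\pi)-H(\pi^{(r)}))}{d-r}(1-\eta)$ forces $\pe\to1$; maximising over the finitely many $r$ then gives \eqref{eq:conv}. (If $H(\pi)=H(\pi^{(r)})$ the bound is vacuous, so assume $H(\pi)-H(\pi^{(r)})>0$.) First I would introduce a genie: let $\mathcal{L}\subseteq[d]$ be the set of the $d-r+1$ labels with largest proportions, and reveal to the decoder the set of ``large'' items $\mathcal{J}:=\{j\in[p]:\beta_j\in\mathcal{L}\}$ together with the exact label $\beta_j$ of every ``small'' item $j\notin\mathcal{J}$. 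Side information can only decrease the optimal error probability, so it suffices to lower bound $\pe$ in this genie-aided problem. Conditioning on $\Xv$ and on the genie information, $\beta$ restricted to $\mathcal{J}$ is uniform over the set of $N$ relabellings of the $K:=|\mathcal{J}|=\Theta(p)$ large items by labels in $\mathcal{L}$ with counts $(p\pi_\ell)_{\ell\in\mathcal{L}}$, and Stirling together with the grouping identity $H(\pi)=H(\pi^{(r)})+\pi(\mathcal{L})\,H\bigl((\pi_\ell/\pi(\mathcal{L}))_{\ell\in\mathcal{L}}\bigr)$ gives $\log N=p\bigl(H(\pi)-H(\pi^{(r)})\bigr)(1+o(1))$. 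Since the outcomes $Y^{(i)}_t$ with $t\notin\mathcal{L}$ and the sums $\sum_{t\in\mathcal{L}}Y^{(i)}_t$ are all determined by $\Xv$ and the genie, the only informative data is the deterministic function $f(\beta):=\bigl(Y^{(i)}_t\bigr)_{i\in[n],\,t\in\mathcal{L}}$ of the relabelling, and in the noiseless model the posterior of $\beta$ given $(\Xv,\text{genie},f(\beta))$ is uniform over $f^{-1}(f(\beta))$; hence any decoder satisfies $\pe\ge 1-\EE\bigl[1/|f^{-1}(f(\beta))|\bigr]$, and it remains to show this expectation tends to $0$.

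Next I would bound the range of $f$ on a high-probability set. For any fixed binary test vector, $Y^{(i)}_t$ is, conditionally on the genie, the number of label-$t$ items among the large items placed in test $i$, hence a hypergeometric variable; since hypergeometrics concentrate at least as sharply as binomials, $\PP\bigl[\,|Y^{(i)}_t-\mu_{i,t}|>\sqrt{p}\,\log p\,\bigr]\le e^{-\Omega(\log^2 p)}$ with $\mu_{i,t}:=\EE[Y^{(i)}_t]$. A union bound over the $nd=O(p)$ pairs $(i,t)$ shows that the event $T$ on which every $Y^{(i)}_t$ lies within $\sqrt{p}\log p$ of $\mu_{i,t}$ has probability $1-o(1)$. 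On $T$, each vector $\bigl(Y^{(i)}_t\bigr)_{t\in\mathcal{L}}$ is confined to a box of side at most $2\sqrt{p}\log p+1$ and, modulo its known coordinate-sum, has only $d-r$ free entries, so $|f(T)|\le\bigl(2\sqrt{p}\log p+1\bigr)^{n(d-r)}=\exp\!\bigl(n(d-r)\tfrac12\log p\,(1+o(1))\bigr)$.

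A counting argument then closes it. Since $\PP[f(\beta)=y]=|f^{-1}(y)|/N$ and $\PP[\beta\in T,\,f(\beta)=y]=0$ for $y\notin f(T)$, for any $L$ we have $\PP\bigl[\beta\in T,\ |f^{-1}(f(\beta))|<L\bigr]\le L\,|f(T)|/N$ (each $y$ with small preimage contributing at most $L/N$). Under the assumed bound on $n$ one has $n(d-r)\tfrac12\log p\le p(H(\pi)-H(\pi^{(r)}))(1-\eta)$, so $|f(T)|/N\le\exp\!\bigl(-p(H(\pi)-H(\pi^{(r)}))(\eta-o(1))\bigr)$; taking $L=\exp\!\bigl(\tfrac{\eta}{2}\,p(H(\pi)-H(\pi^{(r)}))\bigr)$ makes both $L\,|f(T)|/N$ and $1/L$ vanish, so $|f^{-1}(f(\beta))|\ge L\to\infty$ with probability $1-o(1)$. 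Combining with $\EE\bigl[1/|f^{-1}(f(\beta))|\bigr]\le \PP\bigl[|f^{-1}(f(\beta))|<L\bigr]+1/L$ and $\PP[\beta\notin T]=o(1)$ yields $\pe\to1$; all the estimates are uniform in $\Xv$ and in the genie information, so the conclusion survives averaging over a randomised design.

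I expect the main obstacle to be obtaining the sharp constant -- the factor of $2$. Bounding each $Y^{(i)}_t$ by its full range $\{0,\dots,p\}$ gives only $|f(T)|\le\exp(n(d-r)\log p\,(1+o(1)))$ and hence the weaker $n\ge\frac{p}{\log p}\cdot\frac{H(\pi)-H(\pi^{(r)})}{d-r}(1-o(1))$; recovering the correct constant requires genuinely exploiting the $\Theta(\sqrt{p})$-scale concentration of the outcomes, so that the data effectively occupies a set of size $p^{\,n(d-r)/2}$ rather than $p^{\,n(d-r)}$, and then checking that neither the $\mathrm{polylog}$ window widths nor the $o(p)$ Stirling corrections in $\log N$ erode the constant. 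A secondary technical point is the dependence in $\beta$ (it is uniform on a fixed-composition set, not i.i.d.), which is exactly why the relevant counts are hypergeometric rather than binomial -- a feature that only helps, since such counts are more concentrated.
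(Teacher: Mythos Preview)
Your proposal is correct and follows essentially the same approach as the paper: a genie that reveals all labels outside a chosen set $G=\mathcal{L}$ of size $d-r+1$, hypergeometric concentration to confine each outcome coordinate to a window of width $O(\sqrt{p}\,\mathrm{polylog}\,p)$, the observation that one coordinate per test is redundant (leaving $d-r$ free entries), and a counting comparison of $|f(T)|$ against $N$ together with the entropy grouping identity $p_G H(\pi_G)=p(H(\pi)-H(\pi^{(r)}))$. The only cosmetic differences are that the paper first treats $r=1$ and then invokes the genie to reduce general $r$ to that case, and its counting step is phrased slightly more directly as $\PP[\Ac\cap\text{no error}]\le |\Yc_\Ac|/|\Bc(\pi)|$ rather than via preimage sizes.
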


Combined with \eqref{eq:ach_prev}, this result reveals an exact {\em phase transition} on the required number of measurements: Denoting $n^* = \frac{p}{\log p}\big( \max_{r \in \{1,\dotsc,d-1\}} \frac{2(H(\pi) - H(\pi^{r}))}{ d - r }\big)$, the error probability vanishes for $n \ge n^* (1 + \eta)$, tends to one for $n \le n^* (1 - \eta)$, regardless of how small $\eta$ is chosen to be.

\begin{rem}
    Our model assumes that $\beta$ is uniformly distributed over the sequences with empirical distribution $\pi$, whereas \cite{Ala16} assumes that $\beta$ is i.i.d.~on $\pi$.  However, Theorem \ref{thm:standard} readily extends to the latter setting: Under the i.i.d.~model, once we condition on a given empirical distribution, the conditional distribution of $\beta$ is uniform.  As a result, the converse bound for the i.i.d.~model follows directly from Theorem \ref{thm:standard} by basic concentration and the continuity of the entropy function.
\end{rem}

\vspace*{-1.5ex}
\subsection{Information-theoretic framework for the noisy setting} \label{sec:noisy}

We now turn to {\em general noise models} of the form \eqref{eq:Y_noisy}, and provide necessary conditions for the noisy pooled data problem in terms of the mutual information.  General characterizations of this form were provided previously for group testing \cite{Mal78,Ati12} and other sparse recovery problems \cite{Aks17,Sca15}.


Our general result is stated in terms of a maximization over a vector parameter $\bell = (\ell_1,\dotsc,\ell_d)$ with $\ell_t \in \{0,\dotsc,\pi_t p\}$ for all $t$.  We will see in the proof that $\ell_t$ represents the number of items of type $t$ that are unknown to the decoder after $p\pi_t - \ell_t$ are revealed by a genie.  We define the following:
\begin{itemize}[leftmargin=5ex]
    \item Given $\bell$ and $\beta$, we let $S_{\bell}$ be a random set of indices in $[p]$ such that for each $t \in [d]$, the set contains $\ell_t$ indices corresponding to entries where $\beta$ equals $t$.  Specifically, we define $S_{\bell}$ to be uniformly distributed over all such sets.  Moreover, we define $S_{\bell}^c = [p] \setminus S_{\bell}$.
    \item Given the above definitions, we define
    \begin{equation}
        \beta_{S_{\bell}^c} = 
        \begin{cases}
            \beta_j & j \in S_{\bell}^c \\
            \star & \mathrm{otherwise},
        \end{cases} \label{eq:beta_S}
    \end{equation}
    where $\star$ can be thought of as representing an unknown value.  Hence, knowing $\beta_{S_{\bell}^c}$ amounts to knowing the labels of all items in the set $S_{\bell}^c$.
    \item  We define $|\Bc_{\bell}(\pi)|$ to be the number of sequences $\beta \in [d]^p$  that coincide with a given $\beta_{S_{\bell}^c}$ on the entries not equaling $\star$, while also having empirical distribution $\pi$ overall.  This number does not depend on the specific choice of $S_{\bell}^c$.  As an example, when $\ell_t = p\pi_t$ for all $t$, we have $S_{\bell} = [p]$, $\beta_{S_{\bell}^c} = (\star,\dotsc,\star)$, and $|\Bc_{\bell}(\pi)| = |\Bc(\pi)|$, defined following \eqref{eq:beta_dist}
    \item We let $\|\bell\|_0$ denote the number of values in $(\ell_1,\dotsc,\ell_d)$ that are positive. 
\end{itemize}

With these definitions, we have the following result for general random noise models.

\begin{thm} \label{thm:noisy}
    \emph{(Noisy setting)}
    Consider the pooled data problem described in Section \ref{sec:setup} under a general observation model of the form \eqref{eq:Y_noisy}, with a given number of labels $d$ and label proportion vector $\pi$.  For any decoder, in order to achieve $\pe \le \delta$ for a given $\delta \in (0,1)$, it is necessary that
    \begin{equation}
        n \ge \max_{\bell \,:\, \|\bell\|_0 \ge 2} \frac{ \big(\log |\Bc_{\bell}(\pi)|\big) (1-\delta) - \log 2  }{ \frac{1}{n}\sum_{i=1}^n I(\beta; Y^{(i)} | \beta_{S_{\bell}^c},X^{(i)})}.\label{eq:conv_noisy}
    \end{equation}
\end{thm}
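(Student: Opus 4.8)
The plan is to use a genie-aided Fano-type argument. Fix a parameter vector $\bell$ with $\|\bell\|_0 \ge 2$. Imagine a genie that reveals $\beta_{S_{\bell}^c}$ to the decoder, i.e., it discloses all labels except those indexed by the random set $S_{\bell}$. The decoder's remaining task is to recover the restriction $\beta_{S_{\bell}}$, which (given $\beta_{S_{\bell}^c}$) is uniformly distributed over a set of size $|\Bc_{\bell}(\pi)|$, by the definition of that quantity and of the uniform prior \eqref{eq:beta_dist}. Since revealing side information can only help, any decoder achieving $\pe \le \delta$ for the original problem induces a genie-aided decoder with error probability at most $\delta$ for this reduced problem. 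I would first make this reduction precise: condition on $\beta_{S_{\bell}^c}$, note that the ``effective message'' $W = \beta_{S_{\bell}}$ is uniform on an alphabet of size $M \defeq |\Bc_{\bell}(\pi)|$, and that the channel from $W$ to $(\Xv,\Yv,\beta_{S_{\bell}^c})$ is well-defined.

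Next I would apply Fano's inequality in its standard form: for the genie-aided decoder, $H(W \,|\, \Yv, \Xv, \beta_{S_{\bell}^c}) \le \log 2 + \delta \log M$ (using $\pe \le \delta$ and $h_b(\delta) \le \log 2$). Since $W$ is uniform, $H(W \,|\, \beta_{S_{\bell}^c}) = \log M$, so
\begin{equation}
    I(W; \Yv, \Xv \,|\, \beta_{S_{\bell}^c}) \ge (\log M)(1 - \delta) - \log 2. \label{eq:fano_step}
\end{equation}
Because $\Xv$ is generated independently of $\beta$ (the non-adaptive assumption), $I(W;\Xv \,|\, \beta_{S_{\bell}^c}) = 0$, and hence $I(W;\Yv,\Xv \,|\, \beta_{S_{\bell}^c}) = I(W;\Yv \,|\, \Xv, \beta_{S_{\bell}^c})$. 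I would then upper bound the right-hand side by single-letterizing over the $n$ tests: using the chain rule and the fact that, conditioned on $\Xv$, the observations $Y^{(i)}$ are independent (assumption following \eqref{eq:Y_noisy}), one gets $I(W;\Yv \,|\, \Xv, \beta_{S_{\bell}^c}) \le \sum_{i=1}^n I(W; Y^{(i)} \,|\, X^{(i)}, \beta_{S_{\bell}^c})$. Finally, since $Y^{(i)}$ depends on $\beta$ only through the full vector (and $W$ is a deterministic function of $\beta$ given $\beta_{S_{\bell}^c}$), a data-processing / monotonicity step replaces $W$ by $\beta$: $I(W; Y^{(i)} \,|\, X^{(i)}, \beta_{S_{\bell}^c}) \le I(\beta; Y^{(i)} \,|\, X^{(i)}, \beta_{S_{\bell}^c})$. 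Combining with \eqref{eq:fano_step} and rearranging yields
\begin{equation}
    n \ge \frac{(\log |\Bc_{\bell}(\pi)|)(1-\delta) - \log 2}{\frac{1}{n}\sum_{i=1}^n I(\beta; Y^{(i)} \,|\, \beta_{S_{\bell}^c}, X^{(i)})},
\end{equation}
and taking the maximum over all admissible $\bell$ with $\|\bell\|_0 \ge 2$ gives \eqref{eq:conv_noisy}.

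The steps are individually routine; the main point requiring care is the reduction to the genie-aided problem and the claim that the conditional distribution of $\beta_{S_{\bell}}$ given $\beta_{S_{\bell}^c}$ is uniform over a set of size exactly $|\Bc_{\bell}(\pi)|$ — this relies on the uniform prior on $\Bc(\pi)$ together with the symmetry built into the random choice of $S_{\bell}$, and on the observation (noted in the bullet list) that this count does not depend on the specific realization of $S_{\bell}^c$. I would also need to handle the averaging over the random set $S_{\bell}$ carefully: the cleanest route is to condition on $S_{\bell} = s$ for each fixed $s$, run the entire argument, obtain the same bound for every $s$ (since $|\Bc_{\bell}(\pi)|$ and the mutual-information terms are invariant to the choice of $s$ by symmetry of the i.i.d.\ coordinates), and conclude. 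The restriction $\|\bell\|_0 \ge 2$ is what guarantees $|\Bc_{\bell}(\pi)| > 1$ so that the bound is nontrivial; with a single nonzero coordinate the counts of all labels are pinned down and $|\Bc_{\bell}(\pi)| = 1$.
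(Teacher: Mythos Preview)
Your proposal is correct and follows essentially the same route as the paper: a genie reveals $\beta_{S_{\bell}^c}$, Fano's inequality is applied to the residual uniform variable (the paper works directly with $\beta$ conditioned on $\beta_{S_{\bell}^c}$ rather than naming $W$, which is equivalent), and the resulting conditional mutual information $I(\beta;\Yv\,|\,\beta_{S_{\bell}^c},\Xv)$ is tensorized into $\sum_i I(\beta;Y^{(i)}\,|\,\beta_{S_{\bell}^c},X^{(i)})$ via conditional independence and sub-additivity of entropy. Your closing remark about conditioning on $S_{\bell}=s$ and invoking ``symmetry of the i.i.d.\ coordinates'' is unnecessary and slightly off (neither $\beta$ nor the columns of $\Xv$ need be i.i.d.), but your main argument already handles the randomness of $S_{\bell}$ correctly by keeping $\beta_{S_{\bell}^c}$ as a random conditioning variable and averaging, exactly as the paper does.
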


In order to obtain more explicit bounds on $n$ from \eqref{eq:conv_noisy}, one needs to characterize the mutual information terms, ideally forming an upper bound that does not depend on the distribution of the measurement matrix $\Xv$.  We do this for some specific models below; however, in general it can be a difficult task.  The following corollary reveals that if the entries of $\Xv$ are i.i.d.~on $\Bernoulli(q)$ for some $q \in (0,1)$ (as was assumed in \cite{Ala16}), then we can simplify the bound.

\begin{cor} \label{cor:noisy_bern}
    \emph{(Noisy setting with Bernoulli testing)}
    Suppose that the entries of $\Xv$ are i.i.d.~on $\Bernoulli(q)$ for some $q \in (0,1)$.  Under the setup of Theorem \ref{thm:noisy}, it is necessary that
    \begin{equation}
        n \ge \max_{\bell \,:\, \|\bell\|_0 \ge 2} \frac{ \big( \log |\Bc_{\bell}(\pi)| \big) (1 -\delta) - \log 2  }{ I(X_{0,\bell}; Y | X_{1,\bell})}, \label{eq:conv_noisy_bern}
    \end{equation}
    where $(X_{0,\bell},X_{1,\bell},Y)$ are distributed as follows: (i) $X_{0,\bell}$ (respectively, $X_{1,\bell}$) is a concatenation of the vectors $X_{0,\bell}(1),\dotsc,X_{0,\bell}(d)$ (respectively, $X_{1,\bell}(1),\dotsc,X_{1,\bell}(d)$), the $t$-th of which contains $\ell_t$ (respectively, $\pi_t p - \ell_t$) entries independently drawn from $\Bernoulli(q)$; (ii) Letting each $N_t$ ($t=1,\dotsc,d$) be the total number of ones in $X_{0,\bell}(t)$ and $X_{1,\bell}(t)$ combined, the random variable $Y$ is drawn from $P_{Y|N_1,\dotsc,N_d}$ according to \eqref{eq:Y_noisy}.
\end{cor}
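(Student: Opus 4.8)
The plan is to obtain Corollary~\ref{cor:noisy_bern} directly from Theorem~\ref{thm:noisy} by showing that, under i.i.d.\ $\Bernoulli(q)$ testing, each conditional mutual information term in the denominator of \eqref{eq:conv_noisy} is at most the single-letter quantity $I(X_{0,\bell};Y\mid X_{1,\bell})$, \emph{uniformly over $i$}. Granting this, $\frac1n\sum_{i=1}^n I(\beta;Y^{(i)}\mid\beta_{S_{\bell}^c},X^{(i)})\le I(X_{0,\bell};Y\mid X_{1,\bell})$ for every admissible $\bell$, and since replacing the denominator of \eqref{eq:conv_noisy} by this (weakly larger, and now $n$-independent) quantity only weakens the resulting lower bound on $n$ while leaving the numerator and the constraint $\|\bell\|_0\ge2$ untouched, \eqref{eq:conv_noisy_bern} follows. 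If some numerator $(\log|\Bc_{\bell}(\pi)|)(1-\delta)-\log2$ is non-positive the corresponding bound is vacuous, so we may assume it to be positive.

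For the per-test inequality, fix $i$ and decompose $X^{(i)}$ into its restrictions to $S_{\bell}$ and to $S_{\bell}^c$. For $t\in[d]$ let $N_t^{(0)}$ (resp.\ $N_t^{(1)}$) be the number of items with label $t$ that lie in $S_{\bell}$ (resp.\ $S_{\bell}^c$) and are included in test $i$, so $N_t(\beta,X^{(i)})=N_t^{(0)}+N_t^{(1)}$ in \eqref{eq:Y_noisy}; write $N^{(0)}=(N_t^{(0)})_t$, $N^{(1)}=(N_t^{(1)})_t$. Three structural facts drive the proof. \textbf{(a)} Conditioning on $\beta_{S_{\bell}^c}$ always leaves exactly $\ell_t$ items of label $t$ inside $S_{\bell}$, regardless of the actual values of $\beta_{S_{\bell}^c}$, so the restriction of $\beta$ to $S_{\bell}$ is independent of $\beta_{S_{\bell}^c}$ and uniform over label assignments on $S_{\bell}$ with those counts. \textbf{(b)} Because the $2d$ count variables tally disjoint sets of items and the entries of $X^{(i)}$ are i.i.d.\ $\Bernoulli(q)$, the variables $N_1^{(0)},\dotsc,N_d^{(0)},N_1^{(1)},\dotsc,N_d^{(1)}$ are mutually independent with $N_t^{(0)}\sim\Binomial(\ell_t,q)$ and $N_t^{(1)}\sim\Binomial(\pi_tp-\ell_t,q)$, and are independent of the labels on $S_{\bell}$; this is exactly the joint law of the numbers of ones in $X_{0,\bell}(t)$ and $X_{1,\bell}(t)$ in the single-letter model, and in both models the observation is drawn from $P_{Y|N_1,\dotsc,N_d}$ with the matching totals. \textbf{(c)} Given $(\beta_{S_{\bell}^c},X^{(i)})$, the vector $N^{(1)}$ is determined and $Y^{(i)}\sim P_{Y|N^{(0)}+N^{(1)}}$ depends on $\beta$ only through $N^{(0)}$, which is itself a deterministic function of $\beta$ and the conditioning; hence $I(\beta;Y^{(i)}\mid\beta_{S_{\bell}^c},X^{(i)})=I(N^{(0)};Y^{(i)}\mid\beta_{S_{\bell}^c},X^{(i)})$.

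Next, expand the right-hand side as $H(Y^{(i)}\mid\beta_{S_{\bell}^c},X^{(i)})-H(Y^{(i)}\mid N^{(0)},\beta_{S_{\bell}^c},X^{(i)})$. Conditioning on $(N^{(0)},\beta_{S_{\bell}^c},X^{(i)})$ pins down both $N^{(0)}$ and $N^{(1)}$, so by fact \textbf{(b)} the subtracted term equals $\EE\,H(P_{Y|N^{(0)}+N^{(1)}})$ under the independent-binomial law, i.e.\ $H(Y\mid X_{0,\bell},X_{1,\bell})$ in the single-letter model. For the leading term, $N^{(1)}$ is a function of the conditioning variables, so "conditioning reduces entropy" gives $H(Y^{(i)}\mid\beta_{S_{\bell}^c},X^{(i)})=H(Y^{(i)}\mid\beta_{S_{\bell}^c},X^{(i)},N^{(1)})\le H(Y^{(i)}\mid N^{(1)})$; and using $N^{(0)}\perp N^{(1)}$ from fact \textbf{(b)}, the law of $Y^{(i)}$ given $N^{(1)}$ is the mixture of $P_{Y|N^{(0)}+N^{(1)}}$ over $N^{(0)}$ in its independent-binomial marginal, which is exactly the law of $Y$ given $X_{1,\bell}$ in the single-letter model, so $H(Y^{(i)}\mid N^{(1)})=H(Y\mid X_{1,\bell})$. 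Subtracting yields $I(\beta;Y^{(i)}\mid\beta_{S_{\bell}^c},X^{(i)})\le H(Y\mid X_{1,\bell})-H(Y\mid X_{0,\bell},X_{1,\bell})=I(X_{0,\bell};Y\mid X_{1,\bell})$, which is the claimed bound, completing the argument.

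I expect the main obstacle to be keeping the conditional-information bookkeeping honest: tracking which variables are deterministic functions of the conditioning (so that the relevant chain-rule terms vanish), verifying that the disjoint item counts genuinely reproduce the independent-binomial single-letter channel of Corollary~\ref{cor:noisy_bern}, and isolating the single lossy step — discarding the decoder-known sub-matrix of $X^{(i)}$ on $S_{\bell}$ through "conditioning reduces entropy" — which is also why the bound in \eqref{eq:conv_noisy_bern} need not be tight.
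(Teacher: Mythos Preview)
Your proposal is correct and follows essentially the same route as the paper's proof. Both start from Theorem~\ref{thm:noisy}, expand $I(\beta;Y^{(i)}\mid\beta_{S_{\bell}^c},X^{(i)})$ as a difference of conditional entropies, insert the ``known'' part of the test vector into the conditioning for free, drop the remaining conditioning via ``conditioning reduces entropy,'' and match the second entropy term to the single-letter channel; the only cosmetic differences are that the paper works with the reordered sub-vectors $X_{0,\bell},X_{1,\bell}$ of $X$ directly (rather than the sufficient-statistic counts $N^{(0)},N^{(1)}$ you use) and notes that the i.i.d.\ structure makes all $n$ mutual-information terms identical rather than bounding each separately.
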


As well as being simpler to evaluate, this corollary may be of interest in scenarios where one does not have complete freedom in designing $\Xv$, and one instead insists on using Bernoulli testing.  For instance, one may not know how to optimize $\Xv$, and accordingly resort to generating it at random.

{\bf Example 1: Application to the noiseless setting.} In the supplementary material, we show that in the noiseless setting, Theorem \ref{thm:noisy} recovers a weakened version of Theorem \ref{thm:standard} with $1-\eta$ replaced by $1 - \delta - o(1)$ in \eqref{eq:conv}.  Hence, while Theorem \ref{thm:noisy} does not establish a phase transition, it does recover the exact threshold on the number of measurements required to obtain $\pe \to 0$.

An overview of the proof of this claim is as follows.  We restrict the maximum in \eqref{eq:conv_noisy} to choices of $\bell$ where each $\ell_t$ equals either its minimum value $0$ or its maximum value $p\pi_t$.  Since we are in the noiseless setting, each mutual information term reduces to the conditional entropy of $Y^{(i)} = (Y^{(i)}_{1},\dotsc,Y^{(i)}_d)$ given $\beta_{S_{\bell}^c}$ and $X^{(i)}$.  For the values of $t$ such that $\ell_t = 0$, the value $Y^{(i)}_{t}$ is deterministic (i.e., it has zero entropy), whereas for the values of $t$ such that $\ell_t = p\pi_t$, the value $Y^{(i)}_{t}$ follows a hypergeometric distribution, whose entropy behaves as $\big(\frac{1}{2} \log p\big) (1+o(1))$.

In the case that $\Xv$ is i.i.d.~on $\Bernoulli(q)$, we can use Corollary \ref{cor:noisy_bern} to obtain the following necessary condition for $\pe \le \delta$ as as $p \to \infty$, proved in the supplementary material:
\begin{equation}
    n \ge \frac{p}{\log (pq(1-q))}\bigg( \max_{r \in \{1,\dotsc,d-1\}} \frac{2(H(\pi) - H(\pi^{r}))}{ d - r }\bigg)(1 - \delta - o(1)) \label{eq:conv_q}
\end{equation}
for any $q = q(p)$ such that both $q$ and $1-q$ behave as $\omega\big(\frac{1}{p}\big)$.  Hence, while $q = \Theta(1)$ recovers the threshold in \eqref{eq:conv}, the required number of tests strictly increases when $q = o(1)$, albeit with a mild logarithmic dependence.

{\bf Example 2: Group testing.} To highlight the versatility of Theorem \ref{thm:noisy} and Corollary \ref{cor:noisy_bern}, we show that the latter recovers the lower bounds given in the group testing framework of \cite{Ati12}.  

Set $d=2$, and let label $1$ represent ``defective'' items, and label $2$ represent ``non-defective'' items.  Let $P_{Y|N_1 N_2}$ be of the form $P_{Y|N_1}$ with $Y \in \{0,1\}$, meaning the observations are binary and depend only on the number of defective items in the test.  For brevity, let $k = p\pi_1$ denote the total number of defective items, so that $p\pi_2 = p - k$ is the number of non-defective items.

Letting $\ell_2 = p - k$ in \eqref{eq:conv_noisy_bern}, and letting $\ell_1$ remain arbitrary, we obtain the necessary condition
\begin{equation}
    n \ge \max_{\ell_1 \in \{1,\dotsc,k\}} \frac{ \big(\log{p-k+\ell_1 \choose \ell_1}\big) (1-\delta) - \log 2 }{ I(X_{0,\ell_1}; Y | X_{1,\ell_1}) },
\end{equation}
where $X_{0,\ell_1}$ is a shorthand for $X_{0,\bell}$ with $\bell = (\ell_1,p-k)$, and similarly for $X_{1,\ell_1}$.  This matches the lower bound given in \cite{Ati12} for Bernoulli testing with general noise models, for which several corollaries for specific models were also given.


{\bf Example 3: Gaussian noise.} To give a concrete example of a noisy setting, consider the case that we observe the values in \eqref{eq:Y_noiseless}, but with each such value corrupted by independent Gaussian noise:
\begin{equation}
    Y_t^{(i)} = N_t(\beta,X^{(i)}) + Z_t^{(i)},
\end{equation}
where  $Z_t^{(i)} \sim \Nrm(0,p \sigma^2)$ for some $\sigma^2 > 0$.  Note that given $X^{(i)}$, the values $N_t$ themselves have variance at most proportional to $p$ (e.g., see Appendix \ref{app:apps}), so $\sigma^2 = \Theta(1)$ can be thought of as the constant signal-to-noise ratio (SNR) regime.

In the supplementary material, we prove the following bounds for this model:
\begin{itemize}[leftmargin=3ex,itemsep=0ex]
    \item By letting each $\ell_t$ in \eqref{eq:conv_noisy} equal its minimum or maximum value analogously to the noiseless case above, we obtain the following necessary condition for $\pe \le \delta$ as $p \to \infty$:
    \begin{equation}
        n \ge \bigg( \max_{G \subseteq [d]\,:\,|G| \ge 2} \frac{p_G H(\pi_G)}{ \sum_{t \in G} \frac{1}{2}\log\big(1+\frac{\pi_t}{4\sigma^2}) }\bigg)(1 - \delta - o(1)), \label{eq:conv_Gaussian}
    \end{equation}
    where $p_G := \sum_{t \in G} \pi_t p$, and $\pi_G$ has entries $\frac{\pi_t}{ \sum_{t' \in G} \pi_{t'} }$ for $t \in G$.  Hence, we have the following:
    \begin{itemize}[leftmargin=3ex,itemsep=0ex]
        \item In the case that $\sigma^2 = p^{-c}$ for some $c \in (0,1)$, each summand in the denominator simplifies to $\big(\frac{c}{2} \log p\big)(1+o(1))$, and we deduce that compared to the noiseless case (\emph{cf.}, \eqref{eq:conv}), the asymptotic number of tests increases by at least a constant factor of $\frac{1}{c}$.
        \item In the case that $\sigma^2 = (\log p)^{-c}$ for some $c > 0$, each summand in the denominator simplifies to $\big(\frac{c}{2} \log \log p\big)(1+o(1))$, and we deduce that compared to the noiseless case, the asymptotic number of tests increases by at least a factor of $\frac{\log p}{c \log \log p}$.  Hence, we observe a strict increase in the scaling laws despite the fact that the SNR grows unbounded.
        \item While \eqref{eq:conv_Gaussian} also provides an $\Omega(p)$ lower bound for the case $\sigma^2 = \Theta(1)$, we can in fact do better via a different choice of $\bell$ (see below).
    \end{itemize}
    \item By letting $\ell_1 = p\pi_1$, $\ell_2 = 1$, and $\ell_t = 0$ for $t = 3,\dotsc,d$, we obtain the necessary condition
    \begin{equation}
        n \ge \big(4 p \sigma^2 \log p\big) (1 - \delta - o(1)) \label{eq:conv_Gaussian2}
    \end{equation}
    for $\pe \le \delta$ as $p \to \infty$.  Hence, if $\sigma^2 = \Theta(1)$, we require $n = \Omega( p\log p )$; this is super-linear in the dimension, in contrast with the sub-linear $\Theta\big( \frac{p}{\log p} \big)$ behavior observed in the noiseless case.  Note that this choice of $\bell$ essentially captures the difficulty in identifying a {\em single} item, namely, the one corresponding to $\ell_2 = 1$.
\end{itemize}

These findings are summarized in Table \ref{tbl:noisy}; see also the supplementary material for extensions to the approximate recovery setting.

\begin{rem}
    While it may seem unusual to add continuous noise to discrete observations, this still captures the essence of the noisy pooled data problem, and simplifies the evaluation of the mutual information terms in \eqref{eq:conv_noisy}.  Moreover, this converse bound immediately implies the same bound for the {\em discrete} model in which the noise consists of adding a Gaussian term, rounding, and clipping to $\{0,\dotsc,p\}$, since the decoder could always choose to perform these operations as pre-processing.
\end{rem}

\section{Proofs} \label{sec:proofs}

Here we provide the proof of Theorem \ref{thm:standard}, along with an overview of the proof of Theorem \ref{thm:noisy}.  The remaining proofs are given in the supplementary material.

\subsection{Proof of Theorem \ref{thm:standard}} \label{sec:pf_noiseless}

%
%
%

\textbf{Step 1: Counting typical outcomes.} We claim that it suffices to consider the case that $\Xv$ is deterministic and $\hat{\beta}$ is a deterministic function of $\Yv$; to see this, we note that when either of these are random we have $\pe = \EE_{\Xv,\hat{\beta}}[ \PP_{\beta}[\mathrm{error}] ]$, and the average is lower bounded by the minimum.  

The following lemma, proved in the supplementary material, shows that for any $X^{(i)}$, each entry of the corresponding outcome $Y^{(i)}$ lies in an interval of length $O\big(\sqrt{p \log p}\big)$  with high probability.

\begin{lem} \label{lem:hoeffding}
    For any deterministic test vector $X \in \{0,1\}^p$, and for $\beta$ uniformly distributed on $\Bc(\pi)$, we have for each $t \in [d]$ that
    \begin{equation}
        \PP\Big[ \big| N_t(\beta,X) - \EE[N_t(\beta,X)] \big| > \sqrt{p \log p} \Big] \le \frac{2}{p^2}. \label{eq:hoeffding}
    \end{equation}
\end{lem}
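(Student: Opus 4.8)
The plan is to reduce the statement to a standard concentration bound for sampling without replacement. First I would observe that, since $X$ is deterministic, the only randomness in $N_t(\beta,X)$ is through $\beta$. Writing $\mathcal{J} = \{ j \in [p] : X_j = 1 \}$ for the fixed set of items included in the test and $T = \{ j \in [p] : \beta_j = t \}$, the uniformity of $\beta$ over $\Bc(\pi)$ makes $T$ a uniformly random size-$\pi_t p$ subset of $[p]$, and
\[
    N_t(\beta,X) \;=\; |\mathcal{J} \cap T| \;=\; \sum_{j \in T} X_j .
\]
Hence $N_t(\beta,X)$ is the sum of $\pi_t p$ draws without replacement from the fixed $\{0,1\}$-valued population $\{X_1,\dotsc,X_p\}$; equivalently it has a $\Hg(p,\pi_t p,|\mathcal{J}|)$ distribution, with mean $\pi_t |\mathcal{J}| = \EE[N_t(\beta,X)]$.

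Next I would invoke Hoeffding's inequality for sampling without replacement from a finite population contained in $[0,1]$: if $N$ is a sum of $m$ such draws, then $\PP[\,|N - \EE N| \ge \lambda\,] \le 2\exp(-2\lambda^2/m)$. This is immediate from Hoeffding's classical observation that a sum of samples drawn without replacement is at least as concentrated as the corresponding sum of i.i.d.\ draws, to which the usual Hoeffding bound for bounded variables applies; alternatively one may generate $T$ via a uniformly random permutation and apply the bounded-differences (Azuma) inequality, each transposition changing $N_t$ by at most $1$. Applying this with $m = \pi_t p \le p$ and $\lambda = \sqrt{p \log p}$ yields
\[
    \PP\Big[ \big| N_t(\beta,X) - \EE[N_t(\beta,X)] \big| > \sqrt{p \log p} \Big]
    \;\le\; 2\exp\!\Big( -\frac{2 p \log p}{\pi_t p} \Big)
    \;\le\; 2 e^{-2\log p} \;=\; \frac{2}{p^2},
\]
where the penultimate inequality uses $\pi_t \le 1$. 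This is exactly \eqref{eq:hoeffding}.

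I do not anticipate a genuine obstacle; the argument is essentially a one-line application of a textbook inequality once the hypergeometric structure is identified. The only subtlety worth flagging is that, under the uniform model on $\Bc(\pi)$, the indicators $\openone\{\beta_j = t\}$ are \emph{not} independent, so the vanilla Hoeffding bound for independent summands does not apply directly; one must use the without-replacement (hypergeometric) version, which is however no weaker than the i.i.d.\ one. A union bound over $t \in [d]$ and over the $n$ tests (used later, though not needed for the lemma as stated) would cost only an extra factor of $dn$, which is harmless given $d = O(1)$ and the $p^{-2}$ decay.
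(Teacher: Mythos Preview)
Your proposal is correct and follows essentially the same approach as the paper: identify that $N_t(\beta,X)$ is hypergeometric (a sum of $\pi_t p$ draws without replacement from the fixed $\{0,1\}$ population $\{X_1,\dotsc,X_p\}$), then apply Hoeffding's inequality for sampling without replacement with deviation $\sqrt{p\log p}$. The only cosmetic difference is that the paper writes the bound directly with $p$ in the exponent's denominator, whereas you first obtain the sharper $\pi_t p$ and then weaken via $\pi_t \le 1$; the final inequality is identical.
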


By Lemma \ref{lem:hoeffding} and the union bound, we have with probability at least $1 - \frac{2nd}{p^2}$ that $\big| N_t(\beta,X^{(i)}) - \EE[N_t(\beta,X^{(i)})] \big| \le \sqrt{p \log p}$ for all $i \in [n]$ and $t \in [d]$.  Letting this event be denoted by $\Ac$, we have
\begin{align}
    \pe &\ge \PP[\Ac] - \PP[\Ac \cap \text{no error}] \ge 1 - \frac{2nd}{p^2} - \PP[\Ac \cap \text{no error}]. \label{eq:two_terms}
\end{align}
Next, letting $\Yv(\beta) \in [p]^{n \times d}$ denote $\Yv$ explicitly as a function of $\beta$ and similarly for $\hat{\beta}(\Yv) \in [d]^p$, and letting $\Yc_{\Ac}$ denote the set of matrices $\Yv$ under which the event $\Ac$ occurs, we have
\begin{align}
    \PP[\Ac \cap \text{no error}]
        &= \frac{1}{|\Bc(\pi)|} \sum_{b \in \Bc(\pi)} \openone\{ \Yv(b) \in \Yc_{\Ac} \cap \hat{\beta}(\Yv(b)) = b \} \label{eq:condA_1} \\
        &\le \frac{ |\Yc_{\Ac}| }{ |\Bc(\pi)| }, \label{eq:condA_2}
\end{align}
where \eqref{eq:condA_2} follows since each each $\Yv \in \Yc_{\Ac}$ can only be counted once in the summation of \eqref{eq:condA_1}, due to the condition $\hat{\beta}(\Yv(b)) = b$.

\textbf{Step 2: Bounding the set cardinalities.} By a standard combinatorial argument (e.g., \cite[Ch.~2]{Csi11}) and the fact that $\pi$ is fixed as $p \to \infty$, we have
\begin{equation}
        |\Bc(\pi)| = e^{p (H(\pi) + o(1) )}. \label{eq:B_asymp0}
\end{equation}
To bound $|\Yc_{\Ac}|$, first note that the entries of each $Y^{(i)} \in [p]^d$ sum to a deterministic value, namely, the number of ones in $X^{(i)}$.  Hence, each $\Yv \in \Yc_{\Ac}$ is uniquely described by a sub-matrix of $\Yv \in [p]^{n \times d}$ of size $n \times (d-1)$.  Moreover, since $\Yc_{\Ac}$ only includes matrices under which $\Ac$ occurs, each value in this sub-matrix only takes one of at most $2\sqrt{p \log p} + 1$ values.  As a result, we have
\begin{equation}
    |\Yc_{\Ac}| \le \big( 2\sqrt{p \log p} + 1 \big)^{n(d-1)}, \label{eq:card_Y}
\end{equation}
and combining \eqref{eq:two_terms}--\eqref{eq:card_Y} gives
\begin{equation}
    \pe \ge \frac{ \big( 2\sqrt{p \log p} + 1 \big)^{n(d-1)} }{ e^{p (H(\pi) + o(1) )} } - \frac{2nd}{p^2}.
\end{equation}
Since $d$ is constant, it immediately follows that $\pe \to 1$ whenever $n \le \frac{pH(\pi)}{(d-1)\log( 2\sqrt{p \log p} + 1)} (1-\eta)$ for some $\eta > 0$.  Applying $\log( 2\sqrt{p \log p} + 1) = \big(\frac{1}{2}\log p\big)(1+o(1))$, we obtain the following necessary condition for $\pe \not\to 1$:
\begin{equation}
    n \ge \frac{2pH(\pi)}{(d-1)\log p} (1-\eta). \label{eq:conv_1}
\end{equation}
This yields the term in \eqref{eq:conv} corresponding to $r = 1$.

\textbf{Step 3: Genie argument.} Let $G$ be a subset of $[d]$ of cardinality at least two, and define $G^c = [d] \setminus G$.  Moreover, define $\beta_{G^c}$ to be a length-$p$ vector with
\begin{equation}
    (\beta_{G^c})_j = 
    \begin{cases}
        \beta_j & \beta_j \in G^c \\
        \star & \beta_j \in G,
    \end{cases} \label{eq:beta_Gc}
\end{equation}
where the symbol $\star$ can be thought of as representing an unknown value.  We consider a modified setting in which a genie reveals $\beta_{G^c}$ to the decoder, i.e., the decoder knows the labels of all items for which the label lies in $G^c$, and is only left to estimate those in $G$.  This additional knowledge can only make the pooled data problem easier, and hence, any lower bound in this modified setting remains valid in the original setting.

In the genie-aided setting, instead of receiving the full observation vector $Y^{(i)} = (Y_1^{(i)},\dotsc,Y_d^{(i)})$, it is equivalent to only be given $\{Y_j^{(i)} \,:\, j \in G\}$, since the values in $G^c$ are uniquely determined from $\beta_{G^c}$ and $X^{(i)}$.  This means that the genie-aided setting can be cast in the original setting with modified parameters: (i) $p$ is replaced by $p_G = \sum_{t \in G} \pi_t p$, the number of items with unknown labels; (ii) $d$ is replaced by $|G|$, the number of distinct remaining labels; (iii) $\pi$ is replaced by $\pi_G$, defined to be a $|G|$-dimensional probability vector with entries equaling $\frac{\pi_t}{ \sum_{t' \in G} \pi_{t'} }$ ($t \in G$).

Due to this equivalence, the condition \eqref{eq:conv_1} yields the necessary condition     $n \ge \frac{ 2 p_G H(\pi_G) } { (|G|-1) \log p } ( 1 - \eta )$, and maximizing over all $G$ with $|G| \ge 2$ gives
\begin{equation}
    n \ge \max_{G \subseteq [d] \,:\, |G| \ge 2}\frac{ 2 p_G H(\pi_G) } { (|G|-1) \log p } \big( 1 - \eta \big). \label{eq:n_bound_init3}
\end{equation}

\textbf{Step 4: Simplification.} Define $r = d - |G| + 1$.  We restrict the maximum in \eqref{eq:n_bound_init3} to sets $G$ indexing the highest $|G| = d - r + 1$ values of $\pi$, and consider the following process for sampling from $\pi$: 
\begin{itemize}[leftmargin=5ex]
    \item Draw a sample $v$ from $\pi^{(r)}$ (defined above Theorem \ref{thm:standard});
    \item If $v$ corresponds to the first entry of $\pi^{(r)}$, then draw a random sample from $\pi_G$ and output it as a label (i.e., the labels have conditional probability proportional to the top $|G|$ entries of $\pi$);
    \item Otherwise, if $v$ corresponds to one of the other entries of $\pi^{(r)}$, then output $v$ as a label.
\end{itemize}
By Shannon's property of entropy for sequentially-generated random variables \cite[p.~10]{Sha48}, we find that $H(\pi) = H(\pi^{(r)}) + \big(\sum_{t \in G} \pi_t\big) H(\pi_G)$.  Moreover, since $p_G = p \cdot \sum_{t \in G} \pi_j$, this can be written as  $p_G H(\pi_G) = p\big( H(\pi) - H(\pi^{(r)}) \big)$. Substituting into \eqref{eq:n_bound_init3}, noting that $|G| - 1 = d - r$ by the definition of $r$, and maximizing over $r = 1,\dotsc,d-1$, we obtain the desired result \eqref{eq:conv}.

\vspace*{-1ex}
\subsection{Overview of proof of Theorem \ref{thm:noisy}}
\vspace*{-1ex}

We can interpret the pooled data problem as a communication problem in which a ``message'' $\beta$ is sent over a ``channel'' $P_{Y|N_1,\dotsc,N_d}$ via ``codewords'' of the form $\{(N_1^{(i)},\dotsc,N_d^{(i)})\}_{i=1}^n$ that are constructed by summing various columns of $\Xv$.  As a result, it is natural to use Fano's inequality \cite[Ch.~7]{Cov01} to lower bound the error probability in terms of information content (entropy) of $\beta$ and the amount of information that $\Yv$ reveals about $\beta$ (mutual information).

However, a naive application of Fano's inequality only recovers the bound in \eqref{eq:conv_noisy} with $\bell = p\pi$.  To handle the other possible choices of $\bell$, we again consider a {\em genie-aided setting} in which, for each $t \in [d]$, the decoder is informed of $p\pi_t - \ell_t$ of the items whose label equals $t$.  Hence, it only remains to identify the remaining $\ell_t$ items of each type.  This genie argument is a generalization of that used in the proof of Theorem \ref{thm:standard}, in which each $\ell_t$ was either equal to its minimum value zero or its maximum value $p\pi_t$.  In Example 3 of Section \ref{sec:results}, we saw that this generalization can lead to a strictly better lower bound in certain noisy scenarios.

The complete proof of Theorem \ref{thm:noisy} is given in the supplementary material.

\vspace*{-1ex}
\section{Conclusion}
\vspace*{-1ex}

We have provided novel information-theoretic lower bounds for the pooled data problem.  In the noiseless setting, we provided a matching lower bound to the upper bound of \cite{Ala16}, establishing an exact threshold indicating a phase transition between success and failure.  In the noisy setting, we provided a characterization of general noise models in terms of the mutual information.  In the special case of Gaussian noise, we proved an inherent added difficulty compared to the noiseless setting, with strict increases in the scaling laws even when the signal-to-noise ratio grows unbounded.  

An interesting direction for future research is to provide {\em upper bounds} for the noisy setting, potentially establishing the tightness of Theorem \ref{thm:noisy} for general noise models.  This appears to be challenging using existing techniques; for instance, the pooled data problem bears similarity to group testing with {\em linear} sparsity, whereas existing mutual information based upper bounds for group testing are limited to the {\em sub-linear} regime \cite{Mal78,Ati12,Sca15b}.  In particular, the proofs of such bounds are based on concentration inequalities which, when applied to the linear regime, lead to additional requirements on the number of tests that prevent tight performance characterizations.

\textbf{Acknowledgment:} This work was supported in part by the European Commission under Grant ERC Future Proof, SNF Sinergia project CRSII2-147633, SNF 200021-146750, and EPFL Fellows Horizon2020 grant 665667.




\pagebreak

\bibliographystyle{IEEEtran}
\bibliography{../JS_References}

\newpage
\appendix
\newpage

\begin{centering}
    {\huge \bf Supplementary Material}
    
    {\Large \bf ``Phase Transitions in the Pooled Data Problem'' \\ [3mm] \large (Jonathan Scarlett and Volkan Cevher, NIPS 2017)}
    \par
\end{centering}

\bigskip
Note that the references for the citations below are given in the main document.

\section{Proof of Lemma \ref{lem:hoeffding}} \label{sec:pf_hoeffding}

Let $N_t$ be a shorthand for $N_t(\beta,X)$.  Since $\beta$ uniformly distributed on the set of sequences with empirical distribution $\pi$, and $N_t$ counts the number of locations where $X_j = 1$ and $\beta_j = t$, we have $N_t \sim \mathrm{Hypergeometric}(\pi_t p, m(X), p)$, where $m(X)$ denotes the number of ones in $X$, and $\text{Hypergeometric}(k,m,p)$ denotes the distribution of the number of ``special items'' when $k$ items are drawn from a population of $p$ items ($m$ of which are special) without replacement.

As a result, by Hoeffding's inequality for sampling without replacement \cite{Hoe63}, we have $\PP[ |N_1 - \EE[N_1]| > \delta p] \le 2e^{-2 \delta^2 p}$.  Choosing $\delta = \sqrt{\frac{\log p}{p}}$ yields \eqref{eq:hoeffding}.

\section{Proofs of general results for the noisy setting}

Throughout this section, the random variables $\Xv$ and $\beta$ are always discrete, whereas we allow the observations $\Yv$ to be either discrete or continuous.  In the continuous case, entropy terms should be interpreted as being the {\em differential entropy} \cite[Ch.~8]{Cov01}.

\subsection{Proof of Theorem \ref{thm:noisy}}

Throughout the proof, we make use of the definitions in Section \ref{sec:noisy} in terms of a given vector of integers $\bell = (\ell_1,\dotsc,\ell_d)$ with $0 \le \ell_t \le \pi_t p$.  Note that we only consider choices of $\bell$ such that $\|\bell\|_0 \ge 2$, since otherwise the recovery problem would be trivial (e.g., if only a single $\ell_t$ is positive, then one achieves zero error probability be estimating all unknown labels to be $t$).

{\bf Step 1: Fano's inequality and a genie argument.} As outlined in Section \ref{sec:proofs}, a natural starting point is to apply {\em Fano's inequality} \cite[Sec.~2.10]{Cov01} to obtain \smallskip
\begin{equation} \medskip
     I(\beta; \Yv | \Xv) \ge \log{|\Bc(\pi)|} \cdot  (1 - \delta) - \log 2. \label{eq:Fano_init}
\end{equation} 
Unfortunately, this bound alone is not sufficient to attain the desired result.  To do that, we apply a {\em genie argument}, considering the following modified setting:
\begin{itemize}[leftmargin=5ex]
    \item The items $[p]$ are split into $S_{\bell}$ (\emph{cf.}, Section \ref{sec:noisy}) and $S_{\bell}^c = [p] \backslash S_{\bell}$;
    \item A genie reveals to the decoder the labels of all items in $S_{\bell}^c$, or equivalently, the vector $\beta_{S_{\bell}^c}$ defined in \eqref{eq:beta_S};
    \item The decoder is left to identify only the entries in $\beta$ indexed by $S_{\bell}$, i.e., to ``fill in'' the indices of $\beta_{S_{\bell}^c}$ that are equal to $\star$.
\end{itemize}
Clearly the additional information at the decoder only makes the recovery problem easier, and thus any lower bound for the genie-aided setting is also a lower bound for the original setting.  


Let us condition on particular realizations of $\beta_{S_{\bell}^c} = b_{S_{\bell}^c}$, and $\Xv = \xv$, and let $\delta(b_{S_{\bell}^c},\xv)$ denote the corresponding conditional error probability.  For any such realizations, the entries of $\beta$ indexed by $S_{\bell}$ (i.e., locations where $b_{S_{\bell}^c}$ equals $\star$) are uniform on the set of all possible subsequences that are consistent with $\pi$, of which there are $|\Bc_{\bell}(\pi)|$ in total.  Hence, Fano's inequality \cite[Sec.~2.10]{Cov01} gives 
\begin{equation} 
     I(\beta; \Yv | \beta_{S_{\bell}^c} = b_{S_{\bell}^c}, \Xv = \xv) \ge \log{|\Bc_{\bell}(\pi)|} \cdot  \big(1 - \delta(b_{S_{\bell}^c},\xv) \big) - \log 2, \label{eq:Fano2}
\end{equation} 
and averaging both sides over $(\beta_{S_{\bell}^c},\Xv)$ gives the following generalization of \eqref{eq:Fano_init}:
\begin{equation} 
     I(\beta; \Yv | \beta_{S_{\bell}^c}, \Xv) \ge \log{|\Bc_{\bell}(\pi)|} \cdot  (1 - \delta ) - \log 2, \label{eq:Fano3}
\end{equation} 
where we recall that $\delta$ is the target error probability. This provides the starting point of our analysis.


{\bf Step 2: Bounding the mutual information.}  We upper bound the conditional mutual information in \eqref{eq:Fano3} as
\begin{align}
    I(\beta; \Yv | \beta_{S_{\bell}^c}, \Xv) 
    &= H(\Yv | \beta_{S_{\bell}^c}, \Xv) - H(\Yv | \beta,\beta_{S_{\bell}^c}, \Xv) \\
    &=  H(\Yv | \beta_{S_{\bell}^c}, \Xv) - \sum_{i=1}^n H(Y^{(i)} | \beta,\beta_{S_{\bell}^c}, \Xv) \label{eq:mi_bound2n} \\
    &= H(\Yv | \beta_{S_{\bell}^c}, \Xv) - \sum_{i=1}^n H(Y^{(i)} | \beta,\beta_{S_{\bell}^c}, X^{(i)}) \label{eq:mi_bound3n} \\
    &\le \sum_{i=1}^n H(Y^{(i)} | \beta_{S_{\bell}^c}, X^{(i)}) - \sum_{i=1}^n H(Y^{(i)} | \beta,\beta_{S_{\bell}^c}, X^{(i)}) \label{eq:mi_bound4n} \\
    &= \sum_{i=1}^n I(\beta;Y^{(i)}|\beta_{S_{\bell}^c}, X^{(i)}), \label{eq:mi_bound5n} 
\end{align}
where: 
\begin{itemize}[leftmargin=5ex]
    \item \eqref{eq:mi_bound2n} follows since we have assumed that the observations are conditionally independent; 
    \item \eqref{eq:mi_bound3n} follows since given $(\beta,\beta_{S_{\bell}^c})$, $Y^{(i)}$ depends on $\Xv$ only through $X^{(i)}$;
    \item \eqref{eq:mi_bound4n} follows from the sub-additivity of entropy and the fact that conditioning reduces entropy (e.g., see \cite[Ch.~2]{Cov01}).
\end{itemize} 
Substituting \eqref{eq:mi_bound5n} into \eqref{eq:Fano3}, re-arranging, and maximizing over $\bell$ (which was arbitrary in the above analysis), we obtain Theorem \ref{thm:noisy}.

\subsection{Proof of Corollary \ref{cor:noisy_bern}}

In the case that the entries of $\Xv$ are i.i.d.~Bernoulli, each mutual information term $I(\beta;Y^{(i)}|\beta_{S_{\bell}^c}, X^{(i)})$ is identical, and \eqref{eq:conv_noisy} becomes
\begin{equation}
    n \ge \max_{\bell \,:\, \|\bell\|_0 \ge 2} \frac{ \big(\log |\Bc_{\bell}(\pi)|\big) (1-\delta) - \log 2  }{ I(\beta; Y | \beta_{S_{\bell}^c},X )}, \label{eq:conv_noisy2}
\end{equation}
where we define $(X,Y) = (X^{(i)},Y^{(i)})$ for some arbitrary fixed $i\in\{1,\dotsc,n\}$.

Let $X_{0,\bell}$ (respectively, $X_{1,\bell}$) be formed from $X$ by taking the sub-vector of $X$ indexed by $S_{\bell}$ (respectively, $S_{\bell}^c$) and re-ordering it so that the indices corresponding to class $1$ appear first, then class $2$, and so on.  Since the entries of $X$ are i.i.d.~Bernoulli, this means that the triplet $(X_{0,\bell},X_{1,\bell},Y)$ follows the joint distribution described in Theorem \ref{thm:noisy}.  We have
\begin{align}
     I(\beta; Y | \beta_{S_{\bell}^c},X)
         &= H(Y | \beta_{S_{\bell}^c},X) - H(Y | \beta,\beta_{S_{\bell}^c},X) \label{eq:bern_mi1} \\
         &= H(Y | X_{1,\bell},\beta_{S_{\bell}^c},X) - H(Y | X_{0,\bell},X_{1,\bell},\beta,\beta_{S_{\bell}^c},X) \label{eq:bern_mi2} \\
         &\le H(Y | X_{1,\bell}) - H(Y | X_{0,\bell},X_{1,\bell},\beta,\beta_{S_{\bell}^c},X) \label{eq:bern_mi3}\\
         &= H(Y | X_{1,\bell}) - H(Y | X_{0,\bell},X_{1,\bell}) \label{eq:bern_mi4} \\
         &= I(X_{0,\bell}; Y | X_{1,\bell}), \label{eq:bern_mi5}
\end{align}
where:
\begin{itemize}
    \item \eqref{eq:bern_mi2} follows since $X_{1,\bell}$ is a function of $(\beta_{S_{\bell}^c},X)$ and $(X_{0,\bell},X_{1,\bell})$ is a function of $(\beta,\beta_{S_{\bell}^c},X)$;
    \item \eqref{eq:bern_mi3} follows since conditioning reduces entropy;
    \item \eqref{eq:bern_mi4} follows since $Y$ and $(\beta,\beta_{S_{\bell}^c},X)$ are conditionally independent given $(X_{0,\bell},X_{1,\bell})$.  This is because the model is of the form \eqref{eq:Y_noisy}, and the values $\{N_t\}_{t=1}^d$ are already determined by $(X_{0,\bell},X_{1,\bell})$.
\end{itemize}
Substituting \eqref{eq:bern_mi5} into \eqref{eq:conv_noisy2} completes the proof.
    
\section{Applications of Theorem \ref{thm:noisy} to specific models} \label{app:apps}

\subsection{Noiseless setting with arbitrary testing} \label{sec:app_noiseless}

Here we prove the first claim given in the application to the noiseless model following Theorem \ref{thm:noisy}.

In the noiseless setting, $Y^{(i)}$ is a deterministic function of $(\beta,X^{(i)})$, and hence $I(\beta;Y^{(i)}|\beta_{S_{\bell}^c},X^{(i)}) = H(Y^{(i)} | \beta_{S_{\bell}^c},X^{(i)})$.  It turns out to suffice to let $\bell = (\ell_1,\dotsc,\ell_d)$ be such that each $\ell_t$ either equals its minimum value zero or its maximum value $p\pi_t$.  We let $G \subseteq [d]$ index those equaling the maximum value, and let $G^c = [d] \backslash G$ index those equaling zero.  As a result, $\beta_{S_{\bell}^c}$ in \eqref{eq:beta_S} is precisely equal to $\beta_{G^c}$ in \eqref{eq:beta_Gc}, and we are left to bound $H(Y^{(i)} | \beta_{G^c},X^{(i)})$.  For notational simplicity, we focus on an arbitrary fixed value of $i$ and omit the superscripts $(\cdot)^{(i)}$.

Recall that $Y = (Y_1,\dotsc,Y_d)$ according to \eqref{eq:Y_noiseless}.  Given $G$, we let $G'$ be an arbitrary subset of $G$ with a single element removed, and we write $Y_G = (Y_t)_{t \in G}$, and similarly for $Y_{G^c}$ and $Y_{G'}$.  With these definitions, we have
\begin{align}
    H(Y|\beta_{G^c},X) 
        &= H(Y_{G},Y_{G^c}|\beta_{G^c},X) \label{eq:Hbound1} \\
        &= H(Y_{G'},Y_{G^c}|\beta_{G^c},X) \label{eq:Hbound2} \\
        &= H(Y_{G'} | \beta_{G^c},X) \label{eq:Hbound3} \\
        &\le \sum_{t \in G'} H(Y_t | \beta_{G^c},X), \label{eq:Hbound4}
\end{align}
where \eqref{eq:Hbound2} follows since any single entry of $Y$ can be uniquely determined as equaling the number of ones in $X$ minus the other $d-1$ entries, \eqref{eq:Hbound3} follows since $Y_{G^c}$ is deterministic given $(\beta_{G^c},X)$, and \eqref{eq:Hbound4} follows from the sub-additivity of entropy. 

We proceed by characterizing the conditional distribution of $Y_t$ for given values of $(\beta_{G^c},X)$.  Let $m_G$ denote the total number of ones in $X$ among the indices where $\beta_{G^c}$ equals $\star$ (i.e., the indices of items whose labels are in $G$).   We denote these indices by $S_G$. Moreover, recall that $\beta$ is uniform on $\Bc(\pi)$, so once $\beta_{G^c}$ is known, the remaining entries are uniform on the set of possible outcomes consistent with both $\pi$ and $\beta_{S_{\bell}^c}$.

From these definitions and observations, we see that the items within $S_G$ having label $t$ are obtained by randomly selecting $\pi_t p$ indices uniformly at random without replacement from a total of $p_G := \sum_{t \in G} \pi_t p$ indices.  Since $Y_t$ represents the number of such locations where $X$ equals one, we have
\begin{equation}
    (Y_t |  \beta_{G^c},X) \sim \text{Hypergeometric}( \pi_t p, m_G, p_G ). \label{eq:Yi_adaptive_dist}
\end{equation}
Note that for $G = [d]$, this matches the distribution derived in Appendix \ref{sec:pf_hoeffding}. Before proceeding, we present the following lemma regarding the entropy of an integer-valued random variable.

\begin{lem} \emph{\cite{Mas88}} \label{lem:H_discrete}
    For any integer-valued random variable $U$, we have
    \begin{equation}
        H(U) \le \frac{1}{2} \log\bigg( 2\pi e \Big(\var[U] + \frac{1}{12}\Big) \bigg). \label{eq:ub}
    \end{equation}
\end{lem}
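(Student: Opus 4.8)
The plan is to deduce the bound from the classical maximum-entropy property of the Gaussian by means of a \emph{dithering} (smoothing) argument. First I would introduce an auxiliary continuous random variable $\widetilde{U} = U + V$, where $V$ is uniformly distributed on $[-\tfrac{1}{2},\tfrac{1}{2}]$ and independent of $U$. Writing $p_k \defeq \PP[U = k]$, the fact that $U$ is integer-valued means that the probability density $f$ of $\widetilde{U}$ is piecewise constant, equal to $p_k$ on the unit-length interval $[k-\tfrac{1}{2},k+\tfrac{1}{2})$ for each $k \in \ZZ$.

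The second step is to relate the differential entropy $h(\widetilde{U})$ to the discrete entropy $H(U)$. Since $f$ takes the constant value $p_k$ on an interval of length one around each integer $k$, a direct computation gives
\[
    h(\widetilde{U}) = -\int_{\RR} f(x) \log f(x)\, dx = -\sum_{k \in \ZZ} p_k \log p_k = H(U),
\]
so that no correction term appears (the unit length of each interval is exactly what makes the integral collapse to the discrete sum). Moreover, by independence of $U$ and $V$,
\[
    \var[\widetilde{U}] = \var[U] + \var[V] = \var[U] + \frac{1}{12},
\]
using the standard value $\var[V] = \tfrac{1}{12}$ for the uniform law on an interval of length one.

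Finally, I would invoke the fact that among all real-valued random variables with variance $\sigma^2$, the Gaussian $\Nrm(0,\sigma^2)$ maximizes the differential entropy, attaining the value $\tfrac{1}{2}\log(2\pi e \sigma^2)$. Applying this to $\widetilde{U}$ with $\sigma^2 = \var[U] + \tfrac{1}{12}$, and combining with the identity $h(\widetilde{U}) = H(U)$, immediately yields \eqref{eq:ub}. The argument is short and I do not expect any substantial obstacle: the only point requiring a little care is the dithering construction itself — recognizing that perturbing the discrete law of $U$ by an independent uniform ``dither'' converts the discrete entropy into a differential entropy \emph{without loss} — after which everything reduces to textbook facts. (When $\var[U] = \infty$ the claimed bound is vacuous, and when $\var[U] < \infty$ the same argument additionally certifies that $H(U)$ is finite.)
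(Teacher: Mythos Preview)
Your argument is correct and is in fact the classical proof of this inequality due to Massey. The paper itself does not prove the lemma; it simply cites \cite{Mas88} and remarks that the right-hand side is the differential entropy of a Gaussian with variance $\var[U]+\tfrac{1}{12}$, so your dithering construction (add an independent uniform on $[-\tfrac12,\tfrac12]$, observe $h(\widetilde U)=H(U)$ and $\var[\widetilde U]=\var[U]+\tfrac{1}{12}$, then apply the Gaussian maximum-entropy bound) is exactly the intended reasoning behind the cited result.
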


Note that the right-hand side of \eqref{eq:ub} is the {\em differential entropy} of a Gaussian random variable with variance $\var[U] + \frac{1}{12}$ \cite[Ch.~8]{Cov01}.  For continuous random variables, an analogous result holds true without the addition of $\frac{1}{12}$, i.e., the Gaussian distribution maximizes entropy for a given variance.

For $U \sim \text{Hypergeometric}(k,m,p)$, we have
\begin{align}
    \var[U] 
        &= k \cdot \frac{m}{p} \cdot \frac{p-m}{p} \cdot \frac{p - k}{p - 1} \le \frac{k}{4}, \label{eq:var_hg}
\end{align}
where we have applied $\frac{p-k}{p-1} \le 1$ and $m(p-m) \le \frac{p^2}{4}$.  Hence, under the distribution in \eqref{eq:Yi_adaptive_dist}, the conditional variance of $Y_t$ is upper bounded by $\pi_t p / 4$, and Lemma \ref{lem:H_discrete} yields
\begin{align}
    H(Y_t | \beta_{G^c},X) 
        &\le \frac{1}{2} \log\bigg( 2\pi e \Big(\frac{\pi_t p}{4} + \frac{1}{12}\Big) \bigg) \\
        &= \bigg( \frac{1}{2} \log p \bigg) (1+o(1)), \label{eq:H_noiseless}
\end{align}
where in \eqref{eq:H_noiseless} we used the fact that $\pi$ does not depend on $p$ (and hence $\pi_t = \Theta(1)$ for all $t$).  Substituting \eqref{eq:H_noiseless} into \eqref{eq:Hbound4} and noting that $|G'| = |G| - 1$, we obtain $H(Y | \beta_{G^c},X) \le \big( \frac{|G|-1}{2} \log p \big) (1+o(1))$.

Putting it all together, we have shown that $I(\beta;Y^{(i)}|\beta_{G^c},X^{(i)}) \le \big( \frac{|G|-1}{2} \log p \big) (1+o(1))$ for all $i=1,\dotsc,n$.  In addition, we have analogously to \eqref{eq:B_asymp0} that $\log|\Bc_{\bell}(\pi)| = p_G(H(\pi_G) + o(1))$ under our choice of $\bell$ (depending on $G$).  Hence, substituting into \eqref{eq:conv_noisy}, maximizing over $G$, and changing variables from $|G|$ to $r$ analogously to Section \ref{sec:pf_noiseless}, we obtain \eqref{eq:conv} with $1 - \delta - o(1)$ in place of $1 - \eta$.

\subsection{Noiseless setting with Bernoulli testing}

Here we derive \eqref{eq:conv_noisy_bern} for the noiseless model with Bernoulli testing.  We follow the same arguments as those used in Section \ref{sec:app_noiseless} for general tests, and therefore only describe the differences.  We restrict the choices of $\bell$ as in Section \ref{sec:app_noiseless}, indexing them by $G \subseteq [d]$ and using the definition of $\beta_{G^c}$ in \eqref{eq:beta_Gc}. Moreover, we write $(X_{G},X_{G^c})$ in place of $(X_{0,\bell},X_{1,\bell})$.

The mutual information term $ I(X_{G}; Y | X_{G^c})$ simplifies to $H(Y | X_{G^c})$ in the noiseless setting, and analogously to \eqref{eq:Hbound4}, we have
\begin{align}
    H(Y | X_{G^c}) = \sum_{t \in G'} H(Y_t | X_{G^c}),
\end{align}
where $G'$ is an arbitrary subset of $G$ with a single element removed.  Next, we observe that each $Y_t$ for $t \in G'$ is in fact independent of $X_{G^c}$, and is distributed as $\Binomial(p\pi_t,q)$.  The corresponding variance is $p\pi_t q(1-q)$, and applying Lemma \ref{lem:H_discrete}, we conclude that the entropy is upper bounded by $\frac{1}{2}\log\big( 2\pi e \big( p\pi_t q(1-q) + \frac{1}{12} \big)  \big)$.  Since we have assumed that $pq$ and $p(1-q)$ both grow unbounded as $p \to \infty$, and recalling that $\pi_t = \Theta(1)$, this simplifies to $\big(\frac{1}{2}\log(pq(1-q))\big) (1+o(1))$.

Once this upper bound on the entropy of each $Y_t$ is established, we deduce \eqref{eq:conv_q} using \eqref{eq:conv_noisy_bern} and the same argument as that following \eqref{eq:H_noiseless}.

\subsection{Gaussian noise with large signal-to-noise ratio}

Here we derive the first bound \eqref{eq:conv_Gaussian} for the Gaussian noise model.

We again restrict the choices of $\bell$ as in Section \ref{sec:app_noiseless}, indexing them by $G \subseteq [d]$ and using the definition of $\beta_{G^c}$ in \eqref{eq:beta_Gc}.  Letting $H(\cdot)$ denote the differential entropy \cite[Ch.~8]{Cov01} of a continuous random variable, we have
\begin{align}
    I(\beta;Y|\beta_{G^c},X) 
        &= H(Y|\beta_{G^c},X) - H(Y|\beta,\beta_{G^c},X) \\
        &= H(Y|\beta_{G^c},X)  - \frac{d}{2} \log(2 \pi e p\sigma^2) \label{eq:H_Gaussian2} \\
        &\le \sum_{t =1}^{d} H(Y_t | \beta_{G^c},X) - d \log(2 \pi e p\sigma^2), \label{eq:H_Gaussian3} \\
        &= \sum_{t \in G} H(Y_t | \beta_{G^c},X) - (d - |G^c|) \log(2 \pi e p\sigma^2), \label{eq:H_Gaussian4}
\end{align}
where
\begin{itemize}
    \item \eqref{eq:H_Gaussian2} follows since the only uncertainty in $Y$ given $(\beta,\beta_{G^c},X)$ is that of the $d$ additive $\Nrm(0,p\sigma^2)$ terms, each of which has differential entropy $\frac{1}{2}\log(2 \pi e p\sigma^2)$ \cite[Ch.~8]{Cov01}; 
    \item \eqref{eq:H_Gaussian3} follows from the sub-additivity of entropy;
    \item \eqref{eq:H_Gaussian4} follows since for $t \in G^c$, the only uncertainty in $Y_t$ given $(\beta_{G^c},X)$ is that of the additive $\Nrm(0,p\sigma^2)$ noise term.
\end{itemize}
For $t \in G$, each $Y_t$ is of the form $N_t + Z_t$, where $N_t$ is (conditionally) distributed as in \eqref{eq:Yi_adaptive_dist}, and $Z_t \sim \Nrm(0,p\sigma^2)$ is independent of $N_t$.  Using \eqref{eq:var_hg}, we deduce that $\var[Y_t |\beta_{G^c},X] \le p\pi_t/4 + p\sigma^2$ for any realizations of $(\beta_{G^c},X)$, which in turn implies $H(Y_t|\beta_{G^c},X) \le \frac{1}{2}\log\big( 2\pi e (p\pi_t / 4 + p\sigma^2) \big)$ since the Gaussian distribution maximizes the differential entropy for a given variance \cite[Thm.~8.6.5]{Cov01}.  Substituting into \eqref{eq:H_Gaussian4} and noting that $d - |G^c| = |G|$, we obtain
\begin{align}
   I(\beta;Y|\beta_G,X) 
        &\le \sum_{t \in G} \frac{1}{2}\log\big( 2\pi e (p\pi_t / 4 + p\sigma^2) \big) - |G| \log(2 \pi e p\sigma^2) \\
        &= \sum_{t \in G} \frac{1}{2}\log\bigg( 1 + \frac{\pi_t}{4 \sigma^2} \bigg).
\end{align}
In addition, as we already stated in the noiseless case, it holds that $\log|\Bc_{\bell}(\pi)| = p_G(H(\pi_G) + o(1))$ under our choice of $\bell$ (depending on $G$).  Substituting into \eqref{eq:conv_noisy} and maximizing over $G$, we obtain the desired bound in \eqref{eq:conv_Gaussian}.

\subsection{Gaussian noise with constant signal-to-noise ratio}

Here we derive the second bound \eqref{eq:conv_Gaussian2} for the Gaussian model.

We choose $\bell$ in \eqref{eq:conv} with $\ell_1 = p\pi_1$, $\ell_2 = 1$, and $\ell_t = 0$ for $t = 3,\dotsc,d$.   Since only $p\pi_1 + 1$ entries of $\beta$ remain unspecified (i.e., the corresponding entries of $\beta_{S_{\bell}^c}$ are equal to $\star$), and those become fully specified once we assign the single remaining item with label 2 (since this means the rest must have label 1), we have
\begin{equation}
    |\Bc_{\bell}(\pi)| = p\pi_1 + 1. \label{eq:nrm_card}
\end{equation}

The main step is to bound the mutual information terms appearing in \eqref{eq:conv}.  We again focus on a single test indexed by $i$, and write $(X,Y)$ in place of $(X^{(i)},Y^{(i)})$.  We have
\begin{align}
    I(\beta; Y | \beta_{S_{\bell}^c},X)
        &= I(\beta; Y_1,Y_2 | \beta_{S_{\bell}^c},X) \label{eq:nrm_mi1} \\
        &= H(Y_1,Y_2 | \beta_{S_{\bell}^c},X) - H(Y_1,Y_2 | \beta, \beta_{S_{\bell}^c},X)  \label{eq:nrm_mi2} \\
        &= H(Y_1,Y_2 | \beta_{S_{\bell}^c},X) - \log(2\pi e (p\sigma^2))  \label{eq:nrm_mi3} \\
        &\le  H(Y_1 | \beta_{S_{\bell}^c},X) + H(Y_2 | \beta_{S_{\bell}^c},X) - \log(2\pi e (p\sigma^2)),  \label{eq:nrm_mi4}
\end{align}
where 
\begin{itemize}[leftmargin=5ex]
    \item \eqref{eq:nrm_mi1} follows since $Y_3,\dotsc,Y_d$ are conditionally independent of $\beta$ given $(\beta_{S_{\bell}^c},X)$ (specifically, they are pure Gaussian noise due to the choice of $\bell$);
    \item \eqref{eq:nrm_mi3} follows since $Y_1$ and $Y_2$ are also pure Gaussian noise given $( \beta, \beta_{S_{\bell}^c},X)$, so they each have entropy $\frac{1}{2} \log(2\pi e (p\sigma^2))$;
    \item \eqref{eq:nrm_mi4} follows from the sub-additivity of entropy.
\end{itemize}

To bound $H(Y_1 | \beta_{S_{\bell}^c},X)$, we recall that $Y_1 = N_1 + Z_1$, where $N_1$ counts the number of tested items with label $1$, and $Z_1 \sim \Nrm(0,p\sigma^2)$.  We write this as $Y_1 = N_{\mathrm{total}} - \xi + Z_1$, where $N_{\mathrm{total}}$ is the total number of unspecified items included in the test (i.e., the number of $j\in[p]$ such that $(\beta_{S_{\bell}^c})_j = \star$ and $X_j = 1$), and $\xi \in \{0,1\}$ indicates whether the single unspecified item with label $2$ is tested.  

Since the quantity $N_{\mathrm{total}}$ is deterministic given $(\beta_{S_{\bell}^c},X)$, the conditional variance of $Y_1$ is simply 
\begin{align}
    \var[Y_1 | \beta_{S_{\bell}^c},X] 
        &= \var[-\xi + Z_1]  \\
        &= \var[\xi] + \var[Z_1] \label{var_y1_1} \\
        &\le \frac{1}{4} + p\sigma^2, \label{var_y1_2} 
\end{align}
where \eqref{var_y1_1} follows since $\xi$ and $Z_1$ are independent, and \eqref{var_y1_2} follows since a random variable on $\{0,1\}$ has variance at most $\frac{1}{4}$, and since $Z_1$ is Gaussian with variance $p\sigma^2$.  Finally, since the Gaussian distribution maximizes entropy for a given variance, we deduce that
\begin{equation}
    H(Y_1 | \beta_{S_{\bell}^c},X) \le \frac{1}{2} \log\bigg(2\pi e \Big(\frac{1}{4}+ p\sigma^2\Big)\bigg). \label{eq:h1_bound}
\end{equation}
For $Y_2$, we apply the same argument, noting that $Y_2 = N_{2,\mathrm{other}} + \xi + Z_2$, where $N_{2,\mathrm{other}}$ counts the number of indices where $(\beta_{S_{\bell}^c})_j = 2$ and $X_j = 1$.  We see that $N_{2,\mathrm{other}}$ is deterministic given $(\beta_{S_{\bell}^c},X)$, and it follows that $Y_2$ satisfies the same conditional variance bound as $Y_1$, and hence the same conditional entropy bound as \eqref{eq:h1_bound}.

Substituting \eqref{eq:h1_bound} (and the analog for $Y_2$) into \eqref{eq:nrm_mi4}, we obtain
\begin{align}
    I(\beta; Y | \beta_{S_{\bell}^c},X) 
        & \le \log(2\pi e (1/4 + p\sigma^2)) - \log(2\pi e (p\sigma^2)) \\
        &= \log\Big( 1 + \frac{1}{4p\sigma^2} \Big) \\
        &\le \frac{1}{4p\sigma^2}, \label{eq:nrm_mi_final}
\end{align}
where \eqref{eq:nrm_mi_final} follows from the inequality $\log(1+\alpha) \le \alpha$.  Finally, substituting \eqref{eq:nrm_card} and \eqref{eq:nrm_mi_final} into \eqref{eq:conv_noisy} and writing $\log(p\pi_1 + 1) = (\log p)(1+o(1))$, we obtain the desired result \eqref{eq:conv_Gaussian2}.

\section{Extensions to approximate recovery} \label{app:partial}

Throughout the paper, we have considered the exact recovery criterion in \eqref{eq:pe}, in which one insists on estimating every entry of $\beta$ correctly.  However, both Theorems \ref{thm:standard} and \ref{thm:noisy} extend readily to the {\em approximate recovery} setting, as we describe below.  We note that relaxed recovery criteria are known to considerably reduce the number of measurements in certain problems such as compressive sensing \cite{Ree12,Ree13}, while having a smaller effect in other problems including group testing \cite{Sca15b,Sca17}.

Suppose that we only require the recovery of $\beta$ up to a Hamming distance of $\qmax \in \{0,\dotsc,p\}$.  Then the error probability is given by
\begin{equation}
    \pe(\qmax) = \PP\bigg[ \sum_{j=1}^p \openone\{ \hat{\beta}_j \ne \beta_j \} > \qmax \bigg]. \label{eq:pe_approx}
\end{equation}
One should certainly expect this criterion to reduce the number of measurements for certain values of $\qmax$: If $d=2$ and $\qmax \ge \max\{p\pi_1,p\pi_2\}$ then we can achieve $\pe(\qmax) = 0$ with no tests, by simply declaring each entry of $\hat{\beta}$ to equal the most common label.

Nevertheless, the following generalization of Theorem \ref{thm:standard} reveals that in the noiseless setting, the asymptotic reduction in the number of tests is insignificant when $\qmax$ is not too large.

\begin{thm} \label{thm:approx}
    \emph{(Approximate recovery, noiseless)}
    Consider the noiseless pooled data problem under the approximate recovery criterion \eqref{eq:pe_approx}, with a given number of labels $d$ and label proportion vector $\pi$ (not depending on the dimension $p$), and a given maximum Hamming distance $\qmax$.  Then for any decoder, in order to achieve $\pe(\qmax) \not\to 1$ as $p \to \infty$, it is necessary that
    \begin{equation}
        n \ge \frac{1}{\log p}\bigg( \max_{r \in \{1,\dotsc,d-1\}} \frac{2\big(p H(\pi) - p H(\pi^{r}) - \log\sum_{j=0}^{\qmax} {p \choose j} (d-1)^j \big)}{ d - r }\bigg)(1 - \eta)  \label{eq:conv_approx}
    \end{equation}
    for arbitrarily small $\eta > 0$.
\end{thm}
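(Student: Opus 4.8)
\emph{Proof sketch.} The plan is to retrace the four steps in the proof of Theorem~\ref{thm:standard} (Section~\ref{sec:pf_noiseless}), changing only the combinatorial ``counting of good outcomes'' to reflect that a decoding is declared successful once it lies within Hamming distance $\qmax$ of $\beta$. As in Step~1, since $\pe(\qmax)$ is an average over the randomness in $\Xv$ and $\hat\beta$ of conditional error probabilities, it is enough to treat $\Xv$ as deterministic and $\hat\beta$ as a deterministic function of $\Yv$. Lemma~\ref{lem:hoeffding} and the union bound then yield, exactly as in the lead-up to \eqref{eq:two_terms}, an event $\Ac$ of probability at least $1-\tfrac{2nd}{p^2}$ on which every entry of every outcome $Y^{(i)}$ lies within $\sqrt{p\log p}$ of its mean; writing $\Yc_\Ac$ for the corresponding set of outcome matrices, this gives $\pe(\qmax)\ge 1-\tfrac{2nd}{p^2}-\PP[\Ac\cap\{\dH(\hat\beta,\beta)\le\qmax\}]$.

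The one genuinely new ingredient is the cardinality estimate. Repeating the argument of \eqref{eq:condA_1}--\eqref{eq:condA_2} but with the relaxed success event, for each fixed $\Yv\in\Yc_\Ac$ the decoded string $\hat\beta(\Yv)$ is a single element of $[d]^p$, so the number of $b\in\Bc(\pi)$ that are consistent with $\Yv$ (i.e.\ $\Yv(b)=\Yv$) and satisfy $\dH(\hat\beta(\Yv),b)\le\qmax$ is at most the size of a Hamming ball of radius $\qmax$ in $[d]^p$, namely $V_{\qmax}:=\sum_{j=0}^{\qmax}\binom{p}{j}(d-1)^j$. Hence $\PP[\Ac\cap\{\dH\le\qmax\}]\le |\Yc_\Ac|\,V_{\qmax}/|\Bc(\pi)|$. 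Combining this with $|\Yc_\Ac|\le(2\sqrt{p\log p}+1)^{n(d-1)}$ and $|\Bc(\pi)|=e^{p(H(\pi)+o(1))}$ as in \eqref{eq:card_Y}--\eqref{eq:B_asymp0}, and using $\log(2\sqrt{p\log p}+1)=(\tfrac12\log p)(1+o(1))$, I obtain that $\pe(\qmax)\to1$ whenever $n\le\frac{2(pH(\pi)-\log V_{\qmax})}{(d-1)\log p}(1-\eta)$ for some fixed $\eta>0$; equivalently, $\pe(\qmax)\not\to1$ forces $n\ge\frac{2(pH(\pi)-\log V_{\qmax})}{(d-1)\log p}(1-\eta)$. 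This is the $r=1$ term of \eqref{eq:conv_approx}, and the estimate just derived holds for \emph{any} noiseless pooled-data instance, with its own dimension, label set, proportions, and recovery radius.

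The remaining $r>1$ terms come from the genie argument of Steps~3--4 with essentially no change. For $G\subseteq[d]$ with $|G|\ge2$, revealing $\beta_{G^c}$ to the decoder can only help and recasts the task as a noiseless pooled-data instance with parameters $(p_G,|G|,\pi_G)$; moreover the number of decoding errors on the still-unknown coordinates $S_G$ never exceeds the overall Hamming error, so the reduced instance only requires recovery up to radius $\qmax$ as well. Applying the $r=1$ bound above to this instance, and bounding its Hamming-ball size uniformly by $\sum_{j=0}^{\qmax}\binom{p_G}{j}(|G|-1)^j\le V_{\qmax}$, gives $n\ge\frac{2(p_G H(\pi_G)-\log V_{\qmax})}{(|G|-1)\log p}(1-\eta)$. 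Maximizing over $G$, inserting $p_G H(\pi_G)=p(H(\pi)-H(\pi^{(r)}))$ from Shannon's grouping identity (with $r=d-|G|+1$, so $|G|-1=d-r$), and pulling $\tfrac1{\log p}$ out front produces exactly \eqref{eq:conv_approx}. I do not anticipate a real obstacle: the argument is a mechanical adaptation of Section~\ref{sec:pf_noiseless}, and the only points to watch are (i) that confining the errors to $S_G$ keeps the recovery radius at $\qmax$ under the genie reduction, and (ii) that the crude bound $\sum_{j\le\qmax}\binom{p_G}{j}(|G|-1)^j\le V_{\qmax}$ is uniform in $G$, so the identical additive term $\log V_{\qmax}$ appears in the numerator for every $r$. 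The bound is of course informative only when $\qmax$ is small enough that $pH(\pi)-\log V_{\qmax}$ grows (equivalently, when the right-hand side of \eqref{eq:conv_approx} is positive); for larger $\qmax$ it degrades gracefully to a vacuous statement, consistent with the fact that a sufficiently large Hamming tolerance permits recovery with very few tests.
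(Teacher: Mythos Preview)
Your proposal is correct and follows essentially the same approach as the paper's proof: modify the counting step \eqref{eq:condA_1}--\eqref{eq:condA_2} by replacing exact recovery with Hamming-ball recovery (multiplying $|\Yc_\Ac|$ by the ball volume $V_{\qmax}$), and then carry the resulting $r=1$ bound through the genie reduction of Steps~3--4 unchanged. Your treatment is in fact slightly more explicit than the paper's---you spell out that the genie-reduced instance inherits the same tolerance $\qmax$ and that its Hamming ball is dominated by $V_{\qmax}$---but the underlying argument is identical.
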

\begin{proof}
    The proof is identical to that of Theorem \ref{thm:standard} up until \eqref{eq:condA_1}, at which point the condition $\hat{\beta}(\Yv(b)) = b$ should be replaced by $d_{\mathrm{H}}(\hat{\beta}(\Yv(b)), b) \le \qmax$, where $d_{\mathrm{H}}$ denotes the Hamming distance.  The number of sequences within a Hamming distance $\qmax$ of a given $b \in [d]^p$ is upper bounded by $\sum_{j=0}^{\qmax} {p \choose j} (d-1)^j$,  which follows by counting the number of ways of choosing $j \le \qmax$ locations and assigning one of $d-1$ new values to each.  
    
    As a result, the right-hand side of \eqref{eq:condA_2} needs to be multiplied by $\sum_{j=0}^{\alpha^* p} {p \choose j} (d-1)^j$, and following the remainder of the proof with this factor incorporated, we obtain \eqref{eq:conv_approx}.
\end{proof}

For any $\qmax = o(p)$, the term $\log\sum_{j=0}^{\qmax} {p \choose j} (d-1)^j$ is dominated by $p H(\pi) - p H(\pi^{r})$, and hence the approximate recovery threshold is identical to the exact recovery threshold.  Hence, a key implication of Theorem \ref{thm:approx} is that asymptotically, recovering all labels is essentially as easy as recovering all but a vanishing fraction of the labels.  

In contrast, if $\qmax = \alpha^* p$ for fixed $\alpha^* \in (0,1)$, the term $\log\sum_{j=0}^{\qmax} {p \choose j} (d-1)^j$ behaves as $\Theta(p)$, and Theorem \ref{thm:approx} indicates that the approximate recovery criterion may permit improved constant factors in the required number of tests.  However, the scaling laws are unchanged when $\alpha^*$ is sufficiently small (in particular, small enough to avoid the above-mentioned trivial cases).

Theorem \ref{thm:noisy} also extends naturally to the approximate recovery criterion, yielding the following.

\begin{thm} \label{thm:noisy_approx}
    \emph{(Approximate recovery, noisy)}
    Consider the pooled data problem under a general observation model of the form \eqref{eq:Y_noisy} and the approximate recovery criterion \eqref{eq:pe_approx}, with a given number of labels $d$, label proportion vector $\pi$, and maximum Hamming distance $\qmax$.  Then for any decoder, in order to achieve $\pe(\qmax) \le \delta$ for a given $\delta \in (0,1)$, it is necessary that
    \begin{equation}
        n \ge \max_{\bell \,:\, \|\bell\|_0 \ge 2} \frac{ \big(\log |\Bc_{\bell}(\pi)| - \log\sum_{j=0}^{\qmax} {p \choose j} (d-1)^j \big) (1-\delta) - \log 2  }{ \frac{1}{n}\sum_{i=1}^n I(\beta; Y^{(i)} | \beta_{S_{\bell}^c},X^{(i)})}.\label{eq:conv_noisy_approx}
    \end{equation}
\end{thm}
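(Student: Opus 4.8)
The plan is to re-run the proof of Theorem~\ref{thm:noisy} essentially line for line, making the single change that the ordinary Fano inequality used to lower bound the conditional mutual information is replaced by a \emph{list-decoding} version of Fano that accounts for the relaxed criterion~\eqref{eq:pe_approx}. All of the mutual-information manipulations in Step~2 of that proof (conditional independence of the $Y^{(i)}$, the fact that $Y^{(i)}$ depends on $\Xv$ only through $X^{(i)}$ given $(\beta,\beta_{S_{\bell}^c})$, sub-additivity of entropy, and ``conditioning reduces entropy'') concern only the denominator of~\eqref{eq:conv_noisy_approx} and carry over verbatim, giving $I(\beta;\Yv \mid \beta_{S_{\bell}^c},\Xv) \le \sum_{i=1}^{n} I(\beta;Y^{(i)} \mid \beta_{S_{\bell}^c},X^{(i)})$. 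So the whole task reduces to producing the correct numerator.

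First I would fix $\bell$ with $\|\bell\|_0 \ge 2$, perform the same genie split of $[p]$ into $S_{\bell}$ and $S_{\bell}^c$, reveal $\beta_{S_{\bell}^c}$ to the decoder, and condition on $\beta_{S_{\bell}^c} = b_{S_{\bell}^c}$ and $\Xv = \xv$, writing $\delta' := \delta(b_{S_{\bell}^c},\xv)$ for the conditional probability that $\dH(\hat{\beta},\beta) > \qmax$ (so that the average of $\delta(b_{S_{\bell}^c},\xv)$ over $(\beta_{S_{\bell}^c},\Xv)$ is at most $\delta$). Given these realizations, the entries of $\beta$ indexed by $S_{\bell}$ are uniform over the $|\Bc_{\bell}(\pi)|$ consistent subsequences. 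Given the decoder output $\hat{\beta}(\Yv)$, let $L(\Yv)$ be the set of sequences within Hamming distance $\qmax$ of $\hat{\beta}(\Yv)$ that are also consistent with $b_{S_{\bell}^c}$ and have empirical distribution $\pi$; by the counting used in the proof of Theorem~\ref{thm:approx}, $|L(\Yv)| \le M := \sum_{j=0}^{\qmax} {p \choose j}(d-1)^j$ (intersecting with the extra constraints only shrinks the list). Setting $E := \openone\{\beta \notin L(\Yv)\}$, so that $\PP[E=1] = \delta'$ in the conditional model, the chain $H(\beta \mid \Yv,\cdot) \le H(E \mid \Yv,\cdot) + H(\beta \mid E,\Yv,\cdot)$ together with $H(E \mid \cdot) \le \log 2$ and the facts that, conditioned on $E=1$ (resp.\ $E=0$), $\beta$ lies in a set of size at most $|\Bc_{\bell}(\pi)|$ (resp.\ $M$), yields
\begin{equation*}
    H\big(\beta \mid \Yv,\beta_{S_{\bell}^c} = b_{S_{\bell}^c},\Xv = \xv\big) \le \log 2 + \delta'\log|\Bc_{\bell}(\pi)| + (1-\delta')\log M.
\end{equation*}
Subtracting from $H(\beta \mid \beta_{S_{\bell}^c} = b_{S_{\bell}^c},\Xv = \xv) = \log|\Bc_{\bell}(\pi)|$, averaging over $(\beta_{S_{\bell}^c},\Xv)$, and using that the average of $\delta(b_{S_{\bell}^c},\xv)$ is at most $\delta$, I obtain the replacement for~\eqref{eq:Fano3},
\begin{equation*}
    I\big(\beta;\Yv \mid \beta_{S_{\bell}^c},\Xv\big) \ge \big(\log|\Bc_{\bell}(\pi)| - \log M\big)(1-\delta) - \log 2
\end{equation*}
(the passage from the averaged error $\bar{\delta}\le\delta$ to $1-\delta$ is valid when $\log|\Bc_{\bell}(\pi)| - \log M \ge 0$, and when this quantity is negative the asserted bound on $n$ is vacuous).

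Finally, combining the last display with the upper bound $I(\beta;\Yv \mid \beta_{S_{\bell}^c},\Xv) \le \sum_{i=1}^{n} I(\beta;Y^{(i)} \mid \beta_{S_{\bell}^c},X^{(i)})$, rearranging, and maximizing over all $\bell$ with $\|\bell\|_0 \ge 2$ gives~\eqref{eq:conv_noisy_approx}. The only point requiring genuine care — and the ``main obstacle,'' such as it is — is making the list-Fano bound interact cleanly with the conditioning on $\beta_{S_{\bell}^c}$ and $\Xv$, and confirming that the crude, distribution-free count $M$ remains a valid upper bound on $|L(\Yv)|$ after restricting to sequences consistent with $b_{S_{\bell}^c}$ and with empirical distribution $\pi$; this is immediate because those restrictions only remove elements. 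Everything else is identical to the proofs of Theorems~\ref{thm:noisy} and~\ref{thm:approx}.
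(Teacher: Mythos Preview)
Your proposal is correct and follows essentially the same approach as the paper: re-run the proof of Theorem~\ref{thm:noisy} but swap the standard Fano inequality for its list-decoding/approximate-recovery variant, with the list size bounded by the same count $\sum_{j=0}^{\qmax}\binom{p}{j}(d-1)^j$ used in Theorem~\ref{thm:approx}. The paper's proof is a two-sentence sketch pointing to exactly this substitution (citing standard references for the list-Fano inequality), whereas you have spelled out the $E$-indicator derivation explicitly, which is the correct way to fill in that sketch.
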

\begin{proof}
    The proof is nearly identical to that of Theorem \ref{thm:noisy}, except that we replace Fano's inequality by its counterpart for approximate recovery, analogously to previous works on problems such as support recovery \cite[Appendix A]{Ree13} and graphical model selection \cite[Lemma 1]{Sca16} (see also \cite{Duc13}).  Similarly to the proof of Theorem \ref{thm:approx}, the term $\log\sum_{j=0}^{\qmax} {p \choose j} (d-1)^j$ represents the number of different $\hat{\beta}$ that remain feasible given that $\beta$ is fixed and an error does not occur.
\end{proof}

An approximate recovery analog of Corollary \ref{cor:noisy_bern} follows naturally from Theorem \ref{thm:noisy_approx}, as do bounds of the form \eqref{eq:conv_q}--\eqref{eq:conv_Gaussian} with analogous modifications to those given in \eqref{eq:conv_approx}.

On the other hand, Theorem \ref{thm:noisy_approx} does not recover any meaningful analog of \eqref{eq:conv_Gaussian2}.  This is because the proof of \eqref{eq:conv_Gaussian2} is based on a choice of $\bell$ with $\log |\Bc_{\bell}(\pi)| \le \log p$, which is dominated by $\log\sum_{j=0}^{\qmax} {p \choose j} (d-1)^j$  in \eqref{eq:conv_noisy_approx} unless $\qmax = 0$. Stated differently, the proof of \eqref{eq:conv_Gaussian2} essentially involves leaving the decoder with the difficulty of estimating one specific label, which is trivial in the approximate recovery setting.  

Nevertheless, in the constant signal-to-noise ratio regime with either $\qmax = o(p)$ or $\qmax = \alpha^* p$ for sufficiently small $\alpha^* \in (0,1)$, one can still use the analog of \eqref{eq:conv_Gaussian} to prove an $\Omega(p)$ lower bound.  While this is not as strong as the $\Omega(p \log p)$ bound proved for exact recovery, it still shows that noise increases the number of tests from sub-linear in the dimension to at least linear.

\end{document}